\RequirePackage[OT1]{fontenc}
\documentclass[10 pt, Journal]{IEEETran}
\usepackage{amssymb, amsfonts, mathtools, xargs, tensor, units, cite, stmaryrd, mathrsfs, algpseudocode, graphicx, comment, amsmath, amsfonts,amssymb, listings, algorithm, tikz,pgfplots,subcaption,amssymb, amsfonts, mathtools, xargs, tensor, soul, units, cite, stmaryrd, mathrsfs, algpseudocode, algorithm, graphicx, color,amsthm,blindtext,pgfplots,multirow,booktabs,textcomp,stfloats,url,xcolor,cancel,enumitem}

\usepackage[normalem]{ulem} 

\pgfplotsset{compat=1.18}
\usetikzlibrary{3d, calc}
\pgfplotsset{compat=newest}
\newtheorem{assumption}{Assumption}

\newtheorem{remark}{Remark}
\newtheorem{lemma}{Lemma}
\newtheorem{theorem}{Theorem}
\newtheorem{corollary}{Corollary}

\DeclareMathOperator{\Int}{Int}

\definecolor{CG}{rgb}{0.1,0.5,0.2}

\definecolor{custom_green}{rgb}{0.1,0.8,0.4}

\begin{document}
\title{\LARGE \bf A Hough transform approach to safety-aware scalar field mapping using Gaussian Processes}
\author{Muzaffar Qureshi$^{1}$ \and Trivikram Satharasi$^{1}$  \and Tochukwu E. Ogri$^{1}$ \and Kyle Volle$^{2}$ \and Rushikesh Kamalapurkar$^{1}$
\thanks{This research was supported in part by the Air Force Research Laboratory under contract number FA8651-24-1-0019. Any opinions, findings, or recommendations in this article are those of the author(s), and do not necessarily reflect the views of the sponsoring agencies.}%
\thanks{$^{1}$ Department of Mechanical and Aerospace Engineering, University of Florida, Gainesville, FL, USA, email: {\tt\footnotesize \{muzaffar.qureshi, t.satharasi, tochukwu.ogri, rkamalapurkar\}@ufl.edu}.}%
\thanks{$^{2}$ Torch Technologies, Shalimar, Florida, USA, email: {
\tt \footnotesize Kyle.Volle@torchtechnologies.com}}}



\maketitle

\begin{abstract}
This paper presents a framework for mapping unknown scalar fields using a sensor-equipped autonomous robot operating in unsafe environments. The unsafe regions are defined as regions of high-intensity, where the field value exceeds a predefined safety threshold. For safe and efficient mapping of the scalar field, the sensor-equipped robot must avoid high-intensity regions during the measurement process. In this paper, the scalar field is modeled as a sample from a Gaussian process (GP), which enables Bayesian inference and provides closed-form expressions for both the predictive mean and the uncertainty. Concurrently, the spatial structure of the high-intensity regions is estimated in real-time using the Hough transform (HT), leveraging the evolving GP posterior. A safe sampling strategy is then employed to guide the robot towards safe measurement locations, using probabilistic safety guarantees on the evolving GP posterior. The estimated high-intensity regions also facilitate the design of safe motion plans for the robot. The effectiveness of the approach is verified through two numerical simulation studies and an indoor experiment for mapping a light-intensity field using a wheeled mobile robot.
\end{abstract}

\begin{IEEEkeywords}
Scalar field mapping, Gaussian Process, Hough Transform, safe exploration, trajectory planning
\end{IEEEkeywords}

\section{Introduction}
\IEEEPARstart{M}{apping} the spatial and temporal variations of an unknown scalar field remains an active area of research, having broad applications in environmental monitoring, ocean floor mapping, radiation detection, fire hazard localization, and wireless signal estimation \cite{muzaffar.Sabella1988,SCC.Lin.Liu.ea2019,SCC.Matsuda.Nozaki.ea2023,SCC.Ren.Li.ea2021}. To construct an accurate map of the scalar field, a common approach is to deploy a sensor-equipped robot to visit predefined locations and collect field measurements \cite{SCC.Razak.Sukumar.ea2019,SCC.La.Sheng.ea2015,SCC.Dossing.Silva.ea2021}. Mobile sensor networks also provide an efficient and practical solution for these mapping tasks, enabling comprehensive data acquisition over large domains using multiple sensor-equipped robots \cite{muzaffar.La.Sheng2013,muzaffar.Lin.AlAbri.ea2020,muzaffar.Nguyen.La.ea2015}. 

In this paper, we focus on mapping a scalar field that can have detrimental effects on the sensor-carrying robot. For example, in applications that require mapping nuclear radiation, forest fires, chemical spills, or hostile radar signals, the robot needs to maintain a safe distance from regions of high-intensity to avoid damage or detection. \cite{muzaffar.Groves.Hernandez.ea2021, muzaffar.Aleotti.Micconi.ea2017, muzaffar.Ghamry.Zhang2016, muzaffar.Casbeer.Li.ea2015}. 

Since the field is unknown, the robot must leverage real-time measurements to infer and avoid the high-intensity regions. Avoidance of high-intensity regions necessitates a framework that concurrently generates an accurate estimate of the scalar field and identifies high-intensity regions in the domain. Safe motion plans can then be generated for the robot, leveraging the estimated high-intensity regions.

Existing approaches to scalar field estimation offer robust techniques for reconstructing spatial distributions from sparse measurements. These approaches emphasize large-scale coverage and optimization of resource usage, such as minimizing control effort for mobile platforms like robots and UAVs \cite{muzaffar.Sabella1988,SCC.Lin.Liu.ea2019,SCC.Matsuda.Nozaki.ea2023,SCC.Ren.Li.ea2021,SCC.Razak.Sukumar.ea2019,SCC.La.Sheng.ea2015,SCC.Dossing.Silva.ea2021, muzaffar.La.Sheng2013,muzaffar.Lin.AlAbri.ea2020,muzaffar.Nguyen.La.ea2015}. Estimation techniques such as \cite{muzaffar.Jayasekaramudeli.Leong.ea2024,muzaffar.Nguyen.La.ea2015} rely on static sensor networks or pre-defined robot trajectories to provide resource-efficient field mapping. These methods aim to optimize the mapping efficiency but do not consider risks related to exposure to high field intensity. 

On the other hand, extremum seeking (ES) algorithms \cite{muzaffar.Krstic.Wang2000, muzaffar.Ariyur.Krstic2003, muzaffar.Ghaffari.Krstic.ea2012} are well-regarded for their efficiency in locating extrema within scalar fields, employing exploitative strategies that guide robots toward local or global optima. Recent advancements have introduced safety-aware mechanisms into the ES framework; notably, \cite{muzaffar.Williams.Krstic.ea2022, muzaffar.Williams.Krstic.ea2025} utilize control barrier function-based filters to mitigate safety violations during transient dynamics. However, these algorithms remain primarily focused on extremum localization, often at the expense of broader exploration and comprehensive field mapping. Consequently, they do not address the objective of constructing a full spatial estimate of the scalar field.

Obstacle avoidance techniques \cite{muzaffar.Katona.Neamah.ea2024,muzaffar.Kunchev.Jain.ea2006,muzaffar.Minguez.Lamiraux.ea2008} are often employed for safe navigation while mapping given scalar fields. Obstacles within the operating environment can be detected with sensors such as LiDAR, Sonar, or depth cameras \cite{muzaffar.BiancaSangiovanni, muzaffar.Li.Wu2022, DrMao}. These sensors provide real-time data within a finite sensing radius, enabling the robot to react to obstacles before violating safety. However, in this paper, \emph{obstacles} are not physical, they are regions where the intensity of the scalar field surpasses a safety threshold. As such, the methods introduced in \cite{muzaffar.Katona.Neamah.ea2024,muzaffar.Kunchev.Jain.ea2006,muzaffar.Minguez.Lamiraux.ea2008,muzaffar.BiancaSangiovanni, muzaffar.Li.Wu2022} do not apply to the safe scalar field mapping problem considered in this paper. Since the sensors only measure the field intensity at the current position, the robot must predict intensities in surrounding areas to map and avoid high-intensity regions.

In this paper, we focus on Gaussian process (GP) regression for scalar field mapping \cite{SCC.Rasmussen.Williams2006}. A key advantage of GP is the computation of a predictive covariance that naturally serves as a utility function for informative exploration \cite{muzaffar.Binney.Krause.ea2010, muzaffar.Hitz.Galceran.ea2017}. For example, the trajectories for the robot can be planned to visit locations with high predictive covariance. While effective, standard GP-based planners do not explicitly incorporate safety constraints imposed by potentially hazardous high-intensity regions.

To ensure safe navigation during the mapping of unknown scalar fields, this paper integrates GP regression with the Hough transform (HT), a robust feature extraction technique widely used in image processing \cite{muzaffar.Yuen.Princen.ea1990, muzaffar.Duda.Hart1972}. The authors presented preliminary results on detection of high-intensity regions in scalar fields in \cite{muzaffar.Qureshi.Ogri.ea2024}. However, \cite{muzaffar.Qureshi.Ogri.ea2024} did not incorporate an optimal strategy for selection of the initial measurement locations, limiting the overall efficiency of the mapping process. In particular, for a fixed number of measurements, selecting uninformative locations can result in degraded mapping performance and increased posterior uncertainty across the domain \cite{SCC.Rasmussen.Williams2006}.

In this study, we address the aforementioned limitation by utilizing optimal sampling strategies, such as \emph{maximum variance sampling (MVS)} \cite{muzaffar.Chaloner.Verdinelli1995}, to optimize the number of measurements
required for mapping. Additionally, the information loss is quantified and compared between scenarios where measurement locations are restricted to safe regions versus when the entire domain is available for sampling using the MVS strategy. The framework developed in this work is also extended to integrate path planning and obstacle avoidance using algorithms such as RRT* \cite{muzaffar.Karaman.Frazzoli2011}. Finally, the performance of the developed GP-HT algorithm is validated through two distinct numerical simulations, as well as an indoor scalar field mapping experiment using a sensor-equipped wheeled mobile robot.

A central challenge addressed in this work is balancing safety with information gain during the mapping of scalar fields. This trade-off becomes particularly complex when informative locations lie within unsafe zones. Consequently, a fundamental problem arises in designing a budget-aware relocation scheme that optimally replaces unsafe measurement candidates. Specifically, given a fixed measurement budget and HT-derived safety estimates, the planner must identify admissible safe points that balance predictive covariance reduction and pathwise cost. This paper addresses this challenge by formulating a systematic relocation algorithm, enabling a rigorous assessment of the information loss incurred by safety-constrained sampling compared to unconstrained MVS.

The remainder of this paper is structured as follows. Section~\ref{sec: problem_formulation} presents the problem formulation. Sections~\ref{sec:GP_section}- IV contain the details of the developed methodology, including the GP regression model and the HT-based detection algorithm. An information loss analysis along with the convergence analysis is presented in Section V and ~\ref{sec:convergence_analysis}, respectively. Section~\ref{sec:simulations} provides numerical simulations and experimental results. A discussion of the findings and the summary of contributions and ideas for future research is offered in Section~\ref{sec:discussion}.

\subsection{Notation}
In this paper, $\mathbb{R}$ denotes the set of real numbers, $\mathbb{R}^n$ and $\mathbb{R}^{n \times m}$ denote the sets of real $n$-dimensional vectors and $n \times m$ dimensional matrices, respectively. $\mathbb{R}_{> 0}$ denotes the set of positive real numbers, and $\mathbb{R}_{\geq a}$ denotes the set of real numbers greater than or
equal to $a \in \mathbb{R}$. The symbol $\| \cdot \|$ represents the $2-$norm for vectors and the induced $2-$norm for matrices. $\mathbf{I}_{n}$ represents an $n \times n$ identity matrix and $\mathbf{0}_n$ represents an $n \times n$ zero matrix. For any matrix $A \in \mathbb{R}^{n \times n}$, $\lambda_{\max}(A)$ and $\lambda_{\min}(A)$ denote the maximum and minimum eigenvalues of matrix $A$, respectively. For a vector $\mathbf{x}$ or a matrix $\mathbf{A}$, its transpose is denoted by $\mathbf{x}^\top$ and $\mathbf{A}^\top$, respectively. The gradient of a scalar-valued function $ f: \mathbb{R}^n \rightarrow \mathbb{R} $ is denoted as $ \nabla f(x) = \left[ \frac{\partial f}{\partial x_1}, \dots, \frac{\partial f}{\partial x_n} \right]^\top \in \mathbb{R}^n $, where $x \in \mathbb{R}^n$. For a given set $ \mathcal{X}$, $ \partial \mathcal{X} $ denotes the boundary, $ \Int(\mathcal{X}) $ denotes the interior, and \( |\mathcal{X}| \) denotes the cardinality of $\mathcal{X}$.

\section{Problem Formulation}\label{sec: problem_formulation}
Consider an unknown scalar field \( f : \mathcal{X} \rightarrow \mathbb{R} \), where \( \mathcal{X} \subset \mathbb{R}^n \) is a compact domain and \( n \in \{2, 3\} \) denotes the spatial dimension. Given a safety threshold $\bar{f} \in \mathbb{R}$, the domain \( \mathcal{X} \) is partitioned into two disjoint regions: the safe region denoted as \( \mathcal{X}^s \) and the high-intensity or unsafe region denoted as \( \mathcal{X}^u \), defined as
\begin{align}
\mathcal{X}^s &\coloneqq \left\{ x \in \mathcal{X} \;\middle|\; f(x) \leq \bar{f} \right\}, \label{eq:safe_region} \\
\mathcal{X}^u &\coloneqq \left\{ x \in \mathcal{X} \;\middle|\; f(x) > \bar{f} \right\}, \label{eq:high_intensity_region}
\end{align}
respectively, where \( \bar{f} \in  (f_{\min},  f_{\max}) \), 
$f_{\min} \coloneqq \min_{x \in \mathcal{X}} f(x)$,  and
$f_{\max} \coloneqq \max_{x \in \mathcal{X}} f(x)$.

In this development, the unsafe region $ \mathcal{X} ^u$ is assumed to be a finite union of disjoint $P$ sets, where each subset corresponds to a distinct field source in $\mathcal{X}$, such that
\begin{equation}
\mathcal{X}^u = \bigcup_{i=1}^{P} \mathcal{X}^{u,i}.
\label{eq:unsafe_region_union}
\end{equation}
The number of high-intensity regions and their spatial profiles are unknown \emph{a priori} and must be inferred online during the measurement process by the robot\footnote{The term spatial profile refers to the geometric characteristics of each super-level set, such as its shape and extent in $\mathcal{X}$.}. The following assumptions are required to facilitate the analysis in subsequent sections.
\begin{assumption}\cite{muzaffar.Schoelkopf.Smola2001}\label{ass:boundedF}
The domain $\mathcal{X}$ is endowed with a positive definite kernel function $k : \mathcal{X} \times \mathcal{X} \rightarrow \mathbb{R}$ and \( f\) has a bounded norm in the native reproducing kernel Hilbert space (RKHS) $\mathcal{H}_k$ of $k$, satisfying $\|f\|_{\mathcal{H}_k} \le B$ for some $B > 0$.
\end{assumption}
\begin{assumption}\label{ass:Lip_bound} \( f : \mathcal{X} \to \mathbb{R} \) is Lipschitz continuous on $\mathcal{X}$, i.e., there exists a constant \( L_f > 0 \) such that $\vert f({x}) - f({x}') \vert \le L_f \|{x} - {x}'\|_2, \, \forall\, {x}, {x}' \in \mathcal{X}$.

\end{assumption}\noindent
To facilitate tractable inference over the domain $\mathcal{X}$, a GP is employed as a probabilistic model for the unknown field $f$\begin{equation}
f\sim \mathcal{GP}(\mu, \sigma),
\label{eq:GP_model}
\end{equation}
where \( \mu: \mathbb{R}^n \rightarrow \mathbb{R} \) is the mean function and \( \sigma: \mathbb{R}^n \times \mathbb{R}^n \rightarrow \mathbb{R} \) is the covariance function. It is noted that while $f$ is modeled via a GP for estimation, the theoretical analysis treats $f$ as a function residing in the RKHS $\mathcal{H}_k$, consistent with \cite{muzaffar.Srinivas.Krause.ea2012}.

To map \( f \), a finite set of measurement locations in \( \mathcal{X} \) is visited to record the field intensity. The measurement collected at position $x_i$ is denoted as $y_i$ and is assumed to be of the form
\begin{equation}
y_i = f(x_i) + \varepsilon_i.
\label{eq:measurement_model}
\end{equation}
where \( \varepsilon_i \sim\mathcal{N}(0, \sigma_n^2) \) is zero-mean Gaussian noise with known variance \( \sigma_n^2 \). The mapping objectives considered in this paper are
\begin{itemize}
    \item \textit{Field Mapping:} Construct an estimate of \( f \) using a finite set of noisy measurements such that the error between the true and estimated field is minimized in safe regions.
    \item \textit{High Intensity Region Avoidance:}  Optimize the measurement locations to minimize the number of measurements from the unknown unsafe set \( \mathcal{X}^u \).
    \item \textit{Safe Trajectory Generation:} Given a sequence of measurement locations, generate collision-free trajectories that ensure the robot avoids the unsafe region \( \mathcal{X}^u \) while navigating between any two measurement locations.
\end{itemize}
The aforementioned tasks are motivated by real-world robotics applications, such as mapping fire fronts in forest monitoring \cite{muzaffar.Lin.Liu.ea2019,muzaffar.Hossain.Zhang.ea2019}, avoiding hostile radar emitters in reconnaissance missions \cite{muzaffar.Pack.DeLima.ea2009}, and characterizing radioactive contamination in nuclear disaster zones \cite{muzaffar.Huo.Liu.ea2020,muzaffar.Ardiny.Witwicki.ea2019,muzaffar.Qian.Song.ea2012}.

\section{Gaussian Process Regression}\label{sec:GP_section}
This section outlines a GP regression framework for mapping the unknown function \( f \) based on the noisy measurements collected by a sensor-equipped autonomous robot. GP regression offers a flexible, non-parametric Bayesian approach for learning complex and nonlinear functions \cite{SCC.Rasmussen.Williams2006}. An important component of GP regression framework is the selection of the kernel function, which encodes prior assumptions about the structure and smoothness of the function being mapped. In this paper, the squared exponential (SE) covariance kernel is selected for GP regression, which is defined as
\begin{equation}
k (x, x') \coloneqq \alpha^2 \exp \left( -\frac{\|x - x'\|^2}{2 l^2} \right),
\label{eq:kernel_function}
\end{equation}
where \( \alpha \in \mathbb{R}_{>0} \) denotes the signal variance and \( l \in \mathbb{R}_{>0} \) denote the length-scale of the kernel function. 

Let \( \mathcal{X}_t= \{ {x}_i \}_{i=1}^{t} \) denote the set of measurement locations,  \( \mathcal{Y}_t = \{ {y}_i \}_{i=1}^{t} \) denote the corresponding field measurements, and \( \mathcal{D}_t = \{ ({x}_i, {y}_i) \}_{i=1}^{t} \) denote the dataset at the measurement step \( t \). The posterior distribution of the GP at any given measurement step \( t \), conditioned on the data \( \mathcal{D}_t \), is expressed as
\begin{equation}
f(x^*) \mid \mathcal{D}_t \sim \mathcal{N}\left( \mu_t(x^*), \sigma_t^2(x^*) \right),
\end{equation}
where $x^*$ denotes an arbitrary test point, \( \mu_t  \) is the posterior mean and \( \sigma_t^2 \) is the posterior variance of the GP. 

Given the measurements \( \mathcal{Y}_t \) at \( \mathcal{X}_t \), an arbitrary test location $x^*$, and the kernel function in \eqref{eq:kernel_function}, the kernel vector $\mathbf{k}_t(x^*) \in \mathbb{R}^t$ between the test point and the measurement locations, and the kernel matrix $K_{\mathrm{t}\mathrm{t}}\in \mathbb{R}^{t \times t}$ of the measurement locations can be defined as \cite{SCC.Rasmussen.Williams2006}
\begin{equation}
\mathbf{k}_\mathrm{t}(x^*) \coloneqq \left[ k(x^*, x_1), \dots, k(x^*, x_\mathrm{t}) \right]^\top,
\label{eq:k_vector}
\end{equation}
\begin{equation}
[K_{\mathrm{t}\mathrm{t}}]_{i,j} \coloneqq k({x}_i, {x}_j), \quad i,j = 1,\dots,\mathrm{t}.
\label{eq:Ktt}
\end{equation}
Using $\mathbf{k}_\mathrm{t}$ and $K_{\mathrm{t}\mathrm{t}}$, the GP posterior mean \( \mu_t \) and variance \( \sigma^2_t \) at any measurement step $t$ are computed as \cite{SCC.Rasmussen.Williams2006}
\begin{equation}
\mu_t(x^*) = \mathbf{k}_\mathrm{t}(x^*)^\top \left( K_{\mathrm{t}\mathrm{t}} + \sigma_n^2 \mathbf{I}_t \right)^{-1} \mathcal{Y}_t,
\label{eq:mean_eq}
\end{equation}
\begin{equation}
\sigma^2_t(x^*) = k(x^*, x^*) - \mathbf{k}_\mathrm{t}(x^*)^\top \left( K_{\mathrm{t}\mathrm{t}} + \sigma_n^2 \mathbf{I}_t \right)^{-1} \mathbf{k}_\mathrm{t}(x^*).
\label{eq:var_eq}
\end{equation}Building on the GP regression framework, the next subsection focuses on information-driven initial selection of planned measurement locations. This initial selection is then modified in Section \ref{sec:hough_transform_estimation} to minimize exposure to high-intensity regions.

\subsection{Maximum Covariance Sampling Strategy}
Existing strategies for the selection of measurement locations aim to reduce the predictive variance by maximizing the information gained from each measurement. A widely used approach is mutual information sampling, which selects the next measurement location expected to yield the largest reduction in predictive variance~\cite{muzaffar.Cover.Thomas2005,muzaffar.Chaloner.Verdinelli1995}.

The mutual information between the true function values \( F_{\mathcal{X}_t} \coloneqq [f(x_1), \ldots, f(x_t)]^\top \) and the noisy measurements \( \mathcal{Y}_t \) is defined as (cf. \cite[Chapter~2]{muzaffar.Cover.Thomas2005})
\begin{equation}\label{eq:information_gain}
I(\mathcal{Y}_t; F_{\mathcal{X}_t}) \coloneqq H(\mathcal{Y}_t) - H(\mathcal{Y}_t \mid F_{\mathcal{X}_t}),
\end{equation}
where \( H(\mathcal{Y}_t) = -\sum_{\substack{y \in \mathcal{Y}_t}} p(y) \ln p(y) \) is the entropy of \( \mathcal{Y}_t \) and \( H(\mathcal{Y}_t\mid F_{\mathcal{X}_t}) = - \sum_{\substack{z} \in F_{\mathcal{X}_t}} \sum_{\substack{y \in \mathcal{Y}_t}} p(z,y) \ln p(y \mid z) \) is the conditional entropy of \( \mathcal{Y}_t \) given \( F_{\mathcal{X}_t} \). For Gaussian distributions, the mutual information takes the closed-form expression \cite{muzaffar.Cover.Thomas2005}
\begin{equation}\label{eq:cov_matrix_eqn}
I(\mathcal{Y}_t; F_{\mathcal{X}_t}) = \frac{1}{2} \ln\left( \det \left( \mathbf{I}_t + \frac{1}{\sigma_n^2} K_{\mathrm{t}\mathrm{t}} \right)\right),
\end{equation}
where \( K_{\mathrm{t}\mathrm{t}} \in \mathbb{R}^{t \times t} \) is the kernel matrix defined in \eqref{eq:Ktt}.

Finding an optimal set of measurement locations denoted as $\mathcal{X}_T^o$ that maximizes $I$ over $\mathcal{X}$ is NP-hard\cite{muzaffar.Ko.Lee.ea1995}. A commonly employed approach to solve this problem is to use greedy sampling as developed in \cite{muzaffar.Krause.Guestrin2012}, where at each step $t$, the next sampling point is selected to maximize the information gain using the rule
\begingroup\medmuskip=0mu\begin{equation}\label{eq:mutual_information_update}
x_{t+1} = \underset{x \in \mathcal{X} \setminus \mathcal{X}_t}{\operatorname{argmax}} \left[ I(\mathcal{Y}_{\mathcal{X}_t \cup \{ x \}}; F_{\mathcal{X}_t \cup \{ x \}}) - I(\mathcal{Y}_{\mathcal{X}_t}; F_{\mathcal{X}_t}) \right].
\end{equation}
\endgroup
Let the optimal information gain over any \( T \) locations be defined as
\begin{equation}
\label{eq:gamma_t_optimal}
\gamma_T^o \coloneqq \max_{\substack{\mathcal{X}_T^o \subseteq \mathcal{X} \\ |\mathcal{X}_T^o| = T}} I(\mathcal{Y}_{\mathcal{X}_T^o}; F_{\mathcal{X}_T^o}),
\end{equation}
and let 
\begin{equation}
\label{eq:gamma_t_greedy}
\gamma_T^g \coloneqq I(\mathcal{Y}_{\mathcal{X}_T^g}; F_{\mathcal{X}_T^g}),
\end{equation}
denote the information gain using the greedy sampling algorithm defined in \eqref{eq:mutual_information_update}, and \( \mathcal{X}_T^g \) is the set of measurement locations selected by the greedy algorithm after \( T \) iterations. Using the submodular property of information function $I$ in \eqref{eq:information_gain}, the greedy algorithm yields a near-optimal solution that satisfies the inequality (cf.~\cite{muzaffar.Nemhauser.Wolsey.ea1978})
\begin{equation}
\label{eq:submodular_bound}
\gamma_T^g \geq \left( 1 - \frac{1}{e} \right) \gamma_T^o.
\end{equation}
The factor \( \left( 1 - \frac{1}{e} \right) \approx 0.632 \) in~\eqref{eq:submodular_bound} guarantees that the information gain from the greedy strategy is at least 63.2\% of that obtained by the globally optimal set $\mathcal{X}^o_T$ (see~\cite{muzaffar.Nemhauser.Wolsey.ea1978, muzaffar.Krause.Guestrin2012}). The information gain serves as a quantitative metric to compare the learning performance of different sampling strategies. Specifically, the bound established in \eqref{eq:submodular_bound} provides a baseline for assessing the information loss incurred when deviating from the greedy strategy as discussed in detail in Section~\ref{sec:information_loss_Gp_HT}.

In the Gaussian setting, where measurements have Gaussian noise, the greedy maximization of information gain simplifies to the selection of the point with the highest variance~\cite{muzaffar.Ko.Lee.ea1995,muzaffar.Krause.Singh.ea2008}. Specifically, the next measurement location is determined by
\begin{equation}\label{eq:var_criterion}
x_{t+1} = \underset{x \in \mathcal{X} \setminus \mathcal{X}_t}{\operatorname{argmax}}\ \sigma^2_t(x),
\end{equation}
where $\sigma^2_t(x)$ is the posterior variance defined in \eqref{eq:var_eq}. This strategy is commonly referred to as maximum variance sampling (MVS) \cite{muzaffar.Chaloner.Verdinelli1995}. Since the covariance computation in~\eqref{eq:var_eq} does not require field measurements, the selection criterion in~\eqref{eq:var_criterion} can be solved offline to yield the most informative set of planned measurement locations. 

 In this paper, it is assumed that an initial set of planned measurement locations, denoted as $\mathcal{X}_T^0$, is pre-computed using MVS. The order in which the measurement locations appear in $\mathcal{X}_T^0$ is dictated by \eqref{eq:var_criterion}. However, it may not be the ideal order for a mobile robot to collect the samples. For example, the user may wish to re-order the points to minimize the distance traveled by the robot. The implications of such reordering on the information gain are addressed in the subsequent section.

\subsection{Reordering Sampling Locations for Path Efficiency}
A natural question that arises with the reordering of $\mathcal{X}_T^0$ is whether the sequence in which the measurements are acquired affects the total information gain. The following lemma shows that the final information gain of a GP is invariant to the order of acquisition of measurement locations.

\begin{lemma}
\label{lem:invariance}
Let \( \mathcal{X}_{T} \subset \mathcal{X} \) be a fixed subset of \( T \) measurement locations and let \( \mathcal{X}^* \subset \mathcal{X} \) denote any arbitrary finite set of test locations. The GP posterior covariance matrix \( \Sigma_T \) computed over \( \mathcal{X}^* \) given measurements at \( \mathcal{X}_{T} \) is invariant under permutations of the acquisition order of the measurement locations.
\end{lemma}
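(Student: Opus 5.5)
We will show that the posterior covariance over $\mathcal{X}^*$ depends on $\mathcal{X}_T$ only through expressions that are invariant under simultaneous relabeling of the measurement points. Let $\mathcal{X}^* = \{x^*_1,\dots,x^*_m\}$, and define $K_{*T}\in\mathbb{R}^{m\times T}$ by $[K_{*T}]_{ij} = k(x^*_i,x_j)$ and $K_{**}\in\mathbb{R}^{m\times m}$ by $[K_{**}]_{ij}=k(x^*_i,x^*_j)$. Extending \eqref{eq:var_eq} entrywise, the posterior covariance is
\begin{equation*}
\Sigma_T = K_{**} - K_{*T}\left(K_{TT}+\sigma_n^2\mathbf{I}_T\right)^{-1}K_{*T}^\top .
\end{equation*}
Its diagonal entries reproduce \eqref{eq:var_eq}. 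Note that $\Sigma_T$ does not depend on the measurements $\mathcal{Y}_T$, so only the ordering of the locations is at stake.

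Let $\pi$ be a permutation of $\{1,\dots,T\}$ and $P\in\mathbb{R}^{T\times T}$ its permutation matrix. Reordering the acquisition sequence to $(x_{\pi(1)},\dots,x_{\pi(T)})$ changes the Gram matrix to $\tilde K_{TT} = P K_{TT} P^\top$ and the cross-covariance to $\tilde K_{*T} = K_{*T}P^\top$; the matrix $K_{**}$ is unchanged. We then use two facts:
\begin{itemize}
\item $P$ is orthogonal, so $P\mathbf{I}_TP^\top=\mathbf{I}_T$ and $\tilde K_{TT}+\sigma_n^2\mathbf{I}_T = P(K_{TT}+\sigma_n^2\mathbf{I}_T)P^\top$.
\item Since $K_{TT}$ is positive semidefinite and $\sigma_n^2>0$, the matrix $K_{TT}+\sigma_n^2\mathbf{I}_T$ is positive definite and hence invertible. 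This gives $(\tilde K_{TT}+\sigma_n^2\mathbf{I}_T)^{-1} = P(K_{TT}+\sigma_n^2\mathbf{I}_T)^{-1}P^\top$.
\end{itemize}
Substituting these,
\begin{equation*}
\tilde\Sigma_T = K_{**} - K_{*T}P^\top P\left(K_{TT}+\sigma_n^2\mathbf{I}_T\right)^{-1}P^\top P K_{*T}^\top = \Sigma_T ,
\end{equation*}
since $P^\top P = \mathbf{I}_T$. This completes the batch argument.

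The main subtlety is what "acquisition order" means. If the posterior is built sequentially with rank-one updates as each measurement arrives, we must show the sequential result matches the batch formula. We would argue by induction on $t$, applying the block-matrix (Schur complement) inversion identity to $K_{t+1,t+1}+\sigma_n^2\mathbf{I}_{t+1}$. This shows that the recursive update
\begin{equation*}
\Sigma_{t+1} = \Sigma_t - \frac{\Sigma_t(\cdot,x_{t+1})\,\Sigma_t(x_{t+1},\cdot)}{\sigma_t^2(x_{t+1})+\sigma_n^2}
\end{equation*}
equals the batch posterior on $\mathcal{X}_{t+1}$. Consequently, the final sequential posterior equals the batch posterior for every ordering, and the permutation argument above applies.

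As a corollary, the information gain \eqref{eq:cov_matrix_eqn} is also order-invariant, because $\det(\mathbf{I}_T+\sigma_n^{-2}PK_{TT}P^\top)=\det(\mathbf{I}_T+\sigma_n^{-2}K_{TT})$.
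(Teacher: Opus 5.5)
Your proposal is correct and follows the paper's own proof: the same substitution $K_{TT}\mapsto \mathbf{P}K_{TT}\mathbf{P}^\top$, $K_{*T}\mapsto K_{*T}\mathbf{P}^\top$, cancellation via $\mathbf{P}^\top\mathbf{P}=\mathbf{I}_T$, and the same determinant remark for the mutual information. Your explicit invertibility check (positive definiteness of $K_{TT}+\sigma_n^2\mathbf{I}_T$) and your sketch that the sequential rank-one updates reproduce the batch posterior are sound additions that the paper leaves implicit.
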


\begin{proof}
Let $K_{**} \in \mathbb{R}^{M \times M}$ be the test-test kernel matrix with entries $[K_{**}]_{i,j} = k(x^*_i, x^*_j)$, and let $K_{*\mathrm{t}} \in \mathbb{R}^{M \times T}$ be the test-train kernel matrix with entries $[K_{*\mathrm{t}}]_{i,j} = k(x^*_i, x_j)$, where $M$ denotes the number of test locations in $\mathcal{X}^*$. Let $\mathcal{O}_A$ and $\mathcal{O}_B$ represent two distinct acquisition orderings of the set $\mathcal{X}_T$, and let $\mathbf{P} \in \mathbb{R}^{T \times T}$ be the permutation matrix that transforms ordering $\mathcal{O}_A$ to ordering $\mathcal{O}_B$. Let the kernel matrices constructed under ordering $\mathcal{O}_A$ be denoted by $K_{\mathrm{tt}}^A$ and $K_{*\mathrm{t}}^A$. Then, the corresponding matrices for ordering $\mathcal{O}_B$ satisfy
\begin{align}
K_{\mathrm{tt}}^B &= \mathbf{P} K_{\mathrm{tt}}^A \mathbf{P}^\top, \label{eq:permute_Ktt} \\
K_{\ast\mathrm{t}}^B &= K_{\ast\mathrm{t}}^A \mathbf{P}^\top, \label{eq:permute_Kat}
\end{align}
For the permuted ordering $\mathcal{O}_B$, the covariance matrix $\Sigma_T^B$ can be expressed as
\begin{align*}
\Sigma_T^B &= K_{\ast\ast} - K_{\ast\mathrm{t}}^B (K_{\mathrm{tt}}^B + \sigma_n^2 \mathbf{I}_T)^{-1} (K_{\ast\mathrm{t}}^B)^\top \\
&= K_{\ast\ast} - (K_{\ast\mathrm{t}}^A \mathbf{P}^\top) \left( \mathbf{P} K_{\mathrm{tt}}^A \mathbf{P}^\top + \sigma_n^2 \mathbf{I}_T \right)^{-1} (K_{\ast\mathrm{t}}^A \mathbf{P}^\top)^\top \\
&= K_{\ast\ast} - K_{\ast\mathrm{t}}^A \mathbf{P}^\top \left( \mathbf{P} (K_{\mathrm{tt}}^A + \sigma_n^2 \mathbf{I}_T) \mathbf{P}^\top \right)^{-1} \mathbf{P} (K_{\ast\mathrm{t}}^A)^\top.
\end{align*}
Using the identity \( (\mathbf{P} Q \mathbf{P}^\top)^{-1} = \mathbf{P} Q^{-1} \mathbf{P}^\top \) and the orthogonality of permutation matrices ($\mathbf{P}^\top \mathbf{P} = \mathbf{I}_T$), the expression simplifies to
\begin{align*}
\Sigma_T^B &= K_{\ast\ast} - K_{\ast\mathrm{t}}^A \mathbf{P}^\top \left[ \mathbf{P} (K_{\mathrm{tt}}^A + \sigma_n^2 \mathbf{I}_T)^{-1} \mathbf{P}^\top \right] \mathbf{P} (K_{\ast\mathrm{t}}^A)^\top \\
&= K_{\ast\ast} - K_{\ast\mathrm{t}}^A (\mathbf{P}^\top \mathbf{P}) (K_{\mathrm{tt}}^A + \sigma_n^2 \mathbf{I}_T)^{-1} (\mathbf{P}^\top \mathbf{P}) (K_{\ast\mathrm{t}}^A)^\top \\
&= K_{\ast\ast} - K_{\ast\mathrm{t}}^A (K_{\mathrm{tt}}^A + \sigma_n^2 \mathbf{I}_T)^{-1} (K_{\ast\mathrm{t}}^A)^\top = \Sigma_T^A.
\end{align*}
Thus, the posterior covariance $\Sigma_T$ is invariant to the acquisition order of $\mathcal{X}^0_T$. Moreover, since permutations are orthogonal, $\det(\mathbf{I}_t + \frac{1}{\sigma_n^2}{K_{\mathrm{t}\mathrm{t}}})$ is order-invariant; hence the mutual information associated with $\mathcal{X}_T$ is also invariant to acquisition order.

\end{proof}
Determining a trajectory that visits all planned measurement locations with the minimum total travel distance is the traveling salesperson problem (TSP)\cite{muzaffar.Applegate.Bixby.ea2006}, which is known to be NP-hard. For practical applications involving a moderate number of measurement locations, approximate algorithms are often sufficient. For instance, the nearest neighbor (NN) heuristic provides a computationally efficient method to construct a reasonable motion plan \cite{muzaffar.Cover.Hart1967}. In the NN algorithm, the robot proceeds from its current position to the nearest unvisited location, repeating the process until all locations in \( \mathcal{X}_T^0 \) are visited. While the NN algorithm does not guarantee a shortest path, it yields a tractable approximation suitable for real-time robotic path planning.

In this paper, the NN heuristic is employed to construct the ordered sequence of measurement locations denoted as
\begin{equation}\label{eq:xdSequence}
\mathbf{X}_T^0 = \left(x_1, x_2, \dots, x_{T}\right),
\end{equation}
where the locations are ordered such that each $x_{i+1}$ is the nearest unvisited neighbor of $x_{i}$, starting from a specified initial location $x_1$. Under Lemma~\ref{lem:invariance}, the initial sequence \( \mathbf{X}_T^0 \) in \eqref{eq:xdSequence} yields the same GP posterior covariance as the planned measurement set \( \mathcal{X}_T^0 \), while providing a feasible trajectory for the robot for data collection. 

Although $\mathbf{X}_T^0$ provides a sequence of MVS locations, the presence of unsafe regions $\mathcal{X}^u$ in the domain necessitates a safety evaluation of these measurement locations. To facilitate the safety evaluation from the evolving GP posterior, the next section focuses on detecting high-intensity regions of the field, which are then used to modify the measurement plan.

\section{Estimation of High Intensity Regions using Hough Transform}\label{sec:hough_transform_estimation}
To ensure the safety of the robot during the measurement process, we must evaluate the uncertainty associated with the GP posterior. Since the true function $f$ is unknown, we rely on a probabilistic confidence bound to distinguish safe and unsafe regions.

Evaluating the GP posterior error over the infinite set $\mathcal{X}$ is analytically intractable. A commonly used approach is to evaluate the GP posterior on a finite number of test locations (cf. \cite{SCC.Wendland2004, SCC.Rasmussen.Williams2006}). To facilitate the analysis, let the subset $\mathcal{X}^* \subset \mathcal{X}$ be a finite set of uniformly distributed testing locations in $\mathcal{X}$, where $|\mathcal{X^*}|=M$. The finite $\mathcal{X}^*$ is quantified by two geometric measures, which are the fill distance \( h_{\mathcal{X}^*, \mathcal{X}} \) and the separation radius \( q_{\mathcal{X}^*} \), defined as

\begin{align}
h_{\mathcal{X}^*, \mathcal{X}} &\coloneqq \sup_{x \in \mathcal{X}} \min_{x^* \in \mathcal{X}^*} \|x - x^*\|_2, \label{eq:fill_distance} \\
q_{\mathcal{X}^*} &\coloneqq \min_{x_i^* \neq x_j^*} \|x_i^* - x_j^*\|_2. \label{eq:separation_radius}
\end{align}
To determine whether a location lies within the safe region \( \mathcal{X}^s \), the associated uncertainty is evaluated using the GP posterior at each update step \( t \). The following lemma from \cite{muzaffar.Srinivas.Krause.ea2012} establishes uniform confidence bounds which will be utilized for both high-intensity region estimation and evaluation of planned measurement locations for safety.

\begin{lemma}\cite[Lemma 5.1]{muzaffar.Srinivas.Krause.ea2012}\label{lem_2}
Given a domain $\mathcal{X}^* \subset \mathbb{R}^n$, a confidence level \( \delta \in (0,1) \), and a non-decreasing sequence \( \{ \pi_t \}_{t \in \mathbb{N}} \) such that \( \sum_{t \geq 1} \pi_t^{-1} = 1 \), if 
\begin{equation}\label{eq:beta_function_MCB}
\beta_t \coloneqq 2 \ln\left( \frac{M \, \pi_t}{\delta} \right),
\end{equation}
then, with probability at least \( 1 - \delta \),
\begin{equation}\label{eq:gp_confidence_bound_lemma1}
 |f(x) - \mu_{t-1}(x)| \le \sqrt{\beta_t} \, \sigma_{t-1}(x), \, \forall t \leq T, \, \forall x \in \mathcal{X}^*,
\end{equation}
where \( \mu_{t-1} \) and \( \sigma_{t-1}\) are the GP posterior mean and covariance functions computed using the observed data \( \mathcal{D}_{t-1} \).
\end{lemma}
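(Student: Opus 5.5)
We follow the argument of Srinivas et al.\ for a finite domain: a Gaussian tail bound at each fixed pair $(t,x)$, followed by a union bound over the $M$ test points and over all rounds $t$. The probabilistic model is the Bayesian one from \eqref{eq:GP_model}: $f$ is a sample from the GP prior and the noise satisfies \eqref{eq:measurement_model}. Conditioned on $\mathcal{D}_{t-1}$, the value $f(x)$ at any fixed $x\in\mathcal{X}^*$ is then Gaussian with mean $\mu_{t-1}(x)$ from \eqref{eq:mean_eq} and variance $\sigma_{t-1}^2(x)$ from \eqref{eq:var_eq}. This holds even when the locations $x_1,\dots,x_{t-1}$ were chosen adaptively, because each $x_i$ depends only on past data, so the posterior formulas remain valid. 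Hence $r \coloneqq (f(x)-\mu_{t-1}(x))/\sigma_{t-1}(x) \sim \mathcal{N}(0,1)$ conditionally; when $\sigma_{t-1}(x)=0$ the bound holds trivially.

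The first step is the standard Gaussian tail estimate: for $c>0$,
\begin{equation*}
\Pr\{|r|>c\} \le e^{-c^2/2}.
\end{equation*}
One proof writes $\Pr\{r>c\} = e^{-c^2/2}(2\pi)^{-1/2}\int_c^\infty e^{-(r-c)^2/2 - c(r-c)}\,dr$ and uses $e^{-c(r-c)}\le 1$ for $r\ge c$, which gives $\Pr\{r>c\}\le \tfrac12 e^{-c^2/2}$; doubling covers both tails. Setting $c=\sqrt{\beta_t}$ bounds the probability that \eqref{eq:gp_confidence_bound_lemma1} fails at a given $(t,x)$ by $e^{-\beta_t/2}$. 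Since this conditional bound holds for every realization of $\mathcal{D}_{t-1}$, it also holds unconditionally.

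Next we take a union bound over the $M$ points of $\mathcal{X}^*$. With $\beta_t$ as in \eqref{eq:beta_function_MCB}, $e^{-\beta_t/2} = \delta/(M\pi_t)$, so the failure probability at step $t$ is at most $\delta/\pi_t$. A second union bound over $t\ge 1$ gives a total failure probability of at most $\sum_{t\ge1}\delta/\pi_t = \delta$, using $\sum_t \pi_t^{-1}=1$; one admissible choice is $\pi_t=\pi^2t^2/6$. The event in \eqref{eq:gp_confidence_bound_lemma1} therefore holds for all $t$, and in particular for all $t\le T$, with probability at least $1-\delta$.

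The main point requiring care is the first step: the paper's analysis otherwise treats $f$ as a fixed RKHS function (Assumption~\ref{ass:boundedF}), and this lemma must instead be read in the Bayesian sense where $f$ is drawn from the GP prior. Under that reading, the key fact is that the conditional Gaussianity of $f(x)$ given adaptively collected data is exact. If one instead wanted the RKHS version, $\beta_t$ would have to be replaced by the larger $2B^2 + 300\gamma_t\ln^3(t/\delta)$ of Srinivas et al., so the proof given here deliberately uses the GP-sample model. The monotonicity of $\pi_t$ is not needed for the argument.
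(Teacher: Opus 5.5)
Your proof is correct and is essentially the argument of \cite[Lemma 5.1]{muzaffar.Srinivas.Krause.ea2012}, which the paper invokes by citation rather than reproducing. That argument has three steps: $f(x)$ given $\mathcal{D}_{t-1}$ is conditionally Gaussian even under adaptive sampling, the tail bound $\Pr\{|r|>c\}\le e^{-c^2/2}$ applies at each $(t,x)$, and union bounds are taken over the $M$ grid points and over $t$ via $\sum_{t}\pi_t^{-1}=1$. Your remark that the lemma must be read with $f$ drawn from the GP prior, rather than as a fixed element of $\mathcal{H}_k$ as in Assumption~\ref{ass:boundedF}, is well taken, since with this $\beta_t$ the bound does not follow from the RKHS assumption alone.
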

\subsection{Binary Safety Map and Hough Transform}\label{sec:binary_hough_transform_estimation}

To detect and identify the high-intensity regions using the GP posterior mean \( \mu_{t-1} \) and covariance \( \sigma_{t-1} \) at any measurement step \(t-1 \), a binary map \( \mathcal{G}_t : \mathcal{X}^* \to \{0,1\} \) is generated using
\begingroup\medmuskip=0mu\begin{equation}
\mathcal{G}_t(x^*) =
\scalebox{0.9}{$\begin{cases}
0, & \text{if } \mu_{t-1}(x^*) \ + \sqrt{\beta_{t}} \sigma_{t-1}(x^*) + L_f h_{\mathcal{X}^*,\mathcal{X}} \le \bar{f}, \\
1, & \text{otherwise},
\end{cases}$}
\label{eq:binary_map}
\end{equation}\endgroup
where $\bar{f}$ is the user-defined safety threshold, \( L_f \) is the Lipschitz constant defined in Assumption \ref{ass:Lip_bound}, and $\beta_{t}$ is the confidence parameter defined in Lemma \ref{lem_2}. 

The HT is applied to the data generated by the binary map \( \mathcal{G}_t \) at each measurement step \( t \). The HT is a technique used in computer vision to isolate geometric primitives by transforming spatial points into curves within a parameter space; shapes are identified by parameter combinations accumulating the maximum number of consistent votes \cite{muzaffar.Duda.Hart1972, muzaffar.Yuen.Princen.ea1990}.

In this paper, the circular or spherical Hough transform is utilized (cf.~\cite{muzaffar.Qureshi.Ogri.ea2024}) to detect objects in the binary image in \eqref{eq:binary_map} yielding
\begin{equation}
\mathcal{C}_t \coloneqq \left\{ (c_i(t), r_i(t)) \right\}_{i=1}^{P_t},
\label{eq:hough_output_2D}
\end{equation}
where \( c_i(t) \) and \( r_i(t)\) denote the center and radius of the \( i \)-th detected circle or sphere, and \( P_t \) denotes the number of circles or spheres identified at iteration \( t \). The union of these detected geometries yields an approximation of the unsafe region, defined as
\begin{equation}
\bar{\mathcal{X}}^u_t \coloneqq \bigcup_{i=1}^{P_t} \bar{\mathcal{B}}(c_i(t), r_{i}(t)),
\label{eq:overapprox_unsafe}
\end{equation}
where \( \bar{\mathcal{B}}(c, r) \) denotes the closed ball of radius $r$ centered at $c$. The estimate of the safe region is given by
\begin{equation}
\bar{\mathcal{X}}^s_t \coloneqq \mathcal{X} \setminus \bar{\mathcal{X}}^u_t.
\label{eq:overapprox_safe}
\end{equation}
The estimates \( \bar{\mathcal{X}}^s_t \) and \( \bar{\mathcal{X}}^u_t \) guide the safety-constrained relocation of planned measurement locations and the synthesis of collision-free trajectories between any pair of measurement locations.
 \begin{remark}
Derivation of bounds on the error between the true safe set \( \mathcal{X}^s \) and the estimated safe set $\bar{\mathcal{X}}^s_t$ is precluded by the fact that the level sets of the underlying scalar field $f$ are not assumed to follow any predefined parametric forms or fixed geometric primitives, and is out of the scope of the paper.
 \end{remark}

\subsection{Safety Evaluation of Measurement Locations}
To avoid measurements in the unsafe set, at each step $t$, the GP posterior generated using the dataset $\mathcal{D}_t$ is used to evaluate the remaining $T-t$ planned locations for safety, and replanned if necessary. Let $\mathbf{X}_T^t$ denote the updated measurement location set at time step $t$ consisting of visited measurement locations up to time step $t$ and planned measurement locations from $t+1$ to $T$.

To determine whether any planned measurement location in $\mathbf{X}_T^t$ lies within the safe region, their associated uncertainty is evaluated using the confidence bounds derived in Lemma \ref{lem_2}. Given the user-defined safety threshold \( \bar{f} \), the time-varying safe subset $\mathbf{X}^{s,t}$ is defined using the probabilistic safety criterion~\eqref{eq:gp_confidence_bound_lemma1} as
\begingroup\medmuskip=0mu\thinmuskip=2mu\thickmuskip=2mu\begin{equation}\label{eq:estimated_safe_region}
\scalebox{0.97}{$\mathbf{X}^{s,t} = \left\{ x \in \mathbf{X}_{T}^{t} \;\middle|\; \mu_{t}(x) + \sqrt{\beta_t} \, \sigma_{t}(x) + L_f h_{\mathcal{X}^*,\mathcal{X}} \le \bar{f}\right\}$},
\end{equation}\endgroup
for $t \geq 1$. At any measurement step $t$, \eqref{eq:estimated_safe_region} provides the safe subset with probabilistic safety guarantees. To utilize the measurement budget $T$, locations deemed unsafe, i.e., the ones in $\mathbf{X}^t_T \setminus \mathbf{X}^{s,t}$ are replanned to render them safe. The following subsection outlines the relocation process.

\subsection{Relocation of Measurement Locations}\label{sec:safe_relocation_replenishment}
Any future location found to lie within the estimated unsafe region \( \bar{\mathcal{X}}_t^u \), as identified by the HT, is replaced with a new location in $\bar{\mathcal{X}}^s_t$. The replacement location is selected from \( \bar{\mathcal{X}}_t^s \) to minimize its distance from the initially planned unsafe location. Relocation of all locations in $\mathbf{X}^t_T \setminus \mathbf{X}^{s,t}$ results in a new planned set $\mathbf{X}_{T}^{t+1}$. Note that \( \mathbf{X}_T^{T} \) denotes the true set of executed measurement locations at the conclusion of the mapping experiment.

The classification of safe and unsafe regions depends on the GP posterior, which evolves as measurement data accumulate \cite{SCC.Rasmussen.Williams2006}. Hough transform-based unsafe region estimates derived early in the process may result in false positives, where safe regions are incorrectly classified as unsafe or vice versa. These misclassifications can lead to the formation of unvisited pockets within the safe domain, hindering the objective of maximizing exploration in safe regions. To correct these misclassifications, the algorithm recursively evaluates the set of previously rejected planned locations. Any location from the initial set $\mathbf{X}_T^0$ that was previously deemed unsafe but is subsequently found to lie within the updated safe region $\bar{\mathcal{X}}^s_t$ is reinstated into the set of planned measurement locations. This strategy ensures that the robot corrects for initial conservatism as the model uncertainty diminishes.

\begin{remark}This paper focuses on the finite measurement budget setting, where budget constraints necessitate a minimum-distance based relocation strategy. This relocation strategy aims to minimize deviations from the planned robot trajectory while effectively utilizing boundary information between safe and unsafe regions from HT. In contrast, under an infinite measurement budget setting, one could employ a maximum-variance sampling approach by iteratively solving Eq. \eqref{eq:var_criterion} over the estimated safe set $\bar{\mathcal{X}}^s_t$. As the number of iterations $T \to \infty$, this latter strategy would yield a dense sampling of the entire safe domain $\mathcal{X}^s$.
\end{remark}

Algorithm \ref{alg:gp_ht} details the complete GP-HT framework, including the relocation strategy and retroactive GP updates. The subsequent section introduces a method for designing feasible trajectories that ensure the robot avoids unsafe regions during the mapping process, leveraging the HT-based estimates of $\bar{\mathcal{X}}^u$.

\subsection{Safe Trajectory Generation using RRT*}
\label{sec:rrt_star_paths}
In this section, the path planning problem is addressed to determine an optimal trajectory that enables the robot to visit the planned measurement locations while avoiding high-intensity regions. Trajectory planning in environments with obstacles is a classical problem in robotics (e.g. \cite{muzaffar.Katona.Neamah.ea2024,muzaffar.Kunchev.Jain.ea2006,muzaffar.Minguez.Lamiraux.ea2008}). Approaches to this problem are generally categorized into optimization-based methods, such as CHOMP~\cite{muzaffar.Ratliff.Zucker.ea2009}, STOMP~\cite{muzaffar.Kalakrishnan.Chitta.ea2011}, and TrajOpt~\cite{muzaffar.Schulman.Ho.ea2013}, which iteratively refine trajectories using gradient information, and sampling-based algorithms, such as the rapidly-exploring random tree (RRT) and its asymptotically optimal variant, RRT*~\cite{muzaffar.Karaman.Frazzoli2011}.

Given a finite sequence of measurement locations \( \mathbf{X}_T^T = (x_1, x_2, \ldots, x_T) \subset \mathcal{X} \), the objective is to construct a safe path for a sensor-equipped robot to traverse between two successive locations and navigate the entire planned trajectory. In this work, at any measurement step \( t \), the estimated unsafe region \( \bar{\mathcal{X}}^u_t \) is treated as the obstacle map, and the RRT* algorithm  is utilized to construct a collision-free path \( \phi_t \) from \( x_t \) to \( x_{t+1} \). This process is repeated until all measurement locations have been visited. This trajectory planning approach guarantees safety subject to the accuracy of the Hough transform-based high-intensity region approximation.

The subsequent section presents a quantitative analysis of the information loss to compare the performance of the proposed GP-HT sampling strategy against the baseline MVS approach, specifically comparing the execution of measurement sequence $\mathbf{X}_T^T$ versus $\mathbf{X}_T^0$.

\section{Information Loss due to GP-HT Algorithm}\label{sec:information_loss_Gp_HT}
To quantify the effect of relocating planned measurement locations on the information gain, the information gain obtained from the final executed set \( \mathbf{X}^T_T \), is compared to that of the initially planned greedy set \( \mathbf{X}^0_T \). Given any sequence of $T$ measurement locations, the information gain can be expressed in terms of the eigenvalues of the kernel matrix derived from \eqref{eq:cov_matrix_eqn} as
\begin{multline}\label{eq:sequence_points}
I(\mathcal{Y}; F_{\mathbf{X}_T}) = \tfrac{1}{2} \ln\left(\det( \mathbf{I}_T + \sigma_n^{-2} K_{\mathrm{t}\mathrm{t}}) \right)
\\= \tfrac{1}{2} \sum_{i=1}^{T} \ln\left(1 + \tfrac{\lambda_i}{\sigma_n^2}\right),
\end{multline}
where $\lambda_i$ denotes the $i$-th eigenvalue of the kernel matrix $K_{\mathrm{t}\mathrm{t}}$ \cite{muzaffar.Srinivas.Krause.ea2012} . Let $K^g \in \mathbb{R}^{T\times T}$ denote the kernel matrix computed using \eqref{eq:Ktt}, using the greedy sampling locations in $\mathbf{X}_T^0$. Let $K^s \in \mathbb{R}^{T\times T}$ denote the kernel matrix computed using the executed measurement locations in $\mathbf{X}_T^T$. Let \( \{\lambda_i^g\} \) and \( \{\lambda_i^s\} \) denote the set of eigenvalues of \( K^g \) and \( K^s \), respectively. The difference in the information gain due to safety-aware relocation can then be computed as \cite{muzaffar.Nemhauser.Wolsey.ea1978}
\begingroup\medmuskip=0mu\thinmuskip=2mu\thickmuskip=2mu
\begin{align}\label{eq:delta_gamma}
\Delta \gamma 
\coloneqq \gamma_T^g - \gamma_T^s 
= \frac{1}{2} \sum_{i=1}^{T} 
\left[ \ln\!\left(1 + \frac{\lambda_i^g}{\sigma_n^2} \right) 
- \ln\!\left(1 + \frac{\lambda_i^s}{\sigma_n^2} \right) \right].
\end{align}
\endgroup
Using the definition of $\gamma_T^g$ from \eqref{eq:gamma_t_greedy} and its correlation from optimal information value $\gamma_T^o$ from \eqref{eq:submodular_bound}, we get 
\begin{equation}
\gamma_T^g \geq \left( 1 - \frac{1}{e} \right) \gamma_T^o,
\end{equation}
The $\Delta \gamma$ from \eqref{eq:delta_gamma} can be utilized to bound the safe information gain
\begin{equation}\label{eq:gamma_s_lower_via_delta}
\gamma_T^s = \gamma_T^g - \Delta\gamma
\ge \Big(1-\frac{1}{e}\Big)\gamma_T^o - \Delta\gamma,
\end{equation}
where $\gamma_T^s$ can be computed using \eqref{eq:cov_matrix_eqn} and final set of executed measurement locations $\mathbf{X}_T^T$ using GP-HT sampling strategy. The loss of information gain due to $\mathbf{X}^T_T$ can be compared using the inequality (see \cite{muzaffar.Nemhauser.Wolsey.ea1978}), 
\begin{equation}
\label{eq:gamma_safe_lower_bound}
\gamma_T^s \geq \frac{\gamma_T^s}{\gamma_T^g} \left( 1 - \frac{1}{e} \right) \gamma_T^o.
\end{equation}

Using the replenishment scheme of adding safe measurement locations in the planned set as discussed in Section \ref{sec:safe_relocation_replenishment}, one can conclude that the loss of information $\Delta \gamma$ is only attributed to unsafe planned measurement locations, while preserving the information utility of safe locations. The bound in \eqref{eq:gamma_safe_lower_bound} provide a quantitative analysis of the information loss while maintaining safety during the measurement process.

The next section provides convergence guarantees for the evolving GP posterior under the GP-HT sampling strategy. The analysis is required since the domain for selecting measurement locations is restricted to safe regions.

\section{Convergence Analysis of the GP-HT Algorithm}
\label{sec:convergence_analysis}
This section establishes convergence properties of the developed GP-HT algorithm under safety constraints. The analysis examines a GP-based safety condition defined on a finite test grid and its extension to the continuous domain via Lipschitz regularity. The objective is to show that unsafe locations are not certified as safe with high probability, and that interior safe locations are eventually certified as the number of measurements increases and the GP posterior uncertainty contracts.

While the practical implementation uses minimum-distance relocation to exploit safe-unsafe boundary information for robotic path planning, the analysis considers the idealized setting where measurement budget $T \to \infty$ and locations are selected using MVS on the certified safe set. In this setting, the MVS rule serves as an exploration heuristic that samples in regions of high covariance, thereby accelerating variance contraction and enabling asymptotic identification of the safe region.

All stochastic statements below are understood with respect to an underlying probability space $(\Omega,\mathcal{F},\mathbb{P})$, where $\Omega$ denotes the set of all realizations of the measurement-noise sequence in the GP observation model, $\mathcal{F}$ is the associated $\sigma$-algebra, and $\mathbb{P}$ is the probability measure induced by the assumed noise distribution. Accordingly, the event $\mathcal{E}\in\mathcal{F}$ represents the subset of noise realizations for which the GP confidence bound holds uniformly over the test grid and over all time steps. Using Lemma \ref{lem_2}, the subsequent analysis relies on the high-probability event defined as
\begin{equation}\label{eq:confidence_event}
\scalebox{0.88}{$
\mathcal{E}
\coloneqq
\left\{
\omega\in\Omega
\;\middle|\;
\forall x\in\mathcal{X}^*,\ \forall t\ge 1:
\bigl| f(x) - \mu_{t-1}(x) \bigr|
\le
\sqrt{\beta_t}\,\sigma_{t-1}(x)
\right\}.
$}
\end{equation}
Provided \(\mathcal{E}\) occurs, the GP confidence bounds hold uniformly for every grid point and for all time steps, allowing the subsequent convergence analysis to be carried out deterministically. It is then shown that the safety condition never admits a grid point whose true function value exceeds the safety threshold.

To formalize this, the true safe and unsafe subsets of the test grid $\mathcal{X}^*$ are defined as
\begin{equation}
\mathcal{X}^{s,*}
\coloneqq
\{x\in\mathcal{X}^* \mid f(x)\le \bar f\},
\end{equation}
\begin{equation}
\mathcal{X}^{u,*}
\coloneqq
\{x\in\mathcal{X}^* \mid f(x)>\bar f\}.
\end{equation}
Using the GP confidence bound specified in \eqref{eq:confidence_event}, define the GP-based safe set as
\begin{equation}
\widehat{\mathcal{X}}^{s,*}_t
\coloneqq
\left\{
x\in\mathcal{X}^*
\ \middle|\
\mu_{t-1}(x)+\sqrt{\beta_t}\,\sigma_{t-1}(x)
\le \bar f
\right\}.
\label{eq:disc_safe_hat_conv}
\end{equation}
By construction, a grid point is certified as safe only if its GP upper confidence bound lies below the prescribed safety threshold. The following lemma establishes the correctness of the safety condition in \eqref{eq:disc_safe_hat_conv}, showing that no true unsafe location is assumed safe.
\begin{lemma}\label{lem:disc_soundness}
Provided \(\mathcal{E}\) occur, the GP-based estimate of the safe set satisfies
\begin{equation}
\widehat{\mathcal{X}}^{s,*}_t \subseteq \mathcal{X}^{s,*},
\qquad \forall t \ge 1.
\end{equation}
\end{lemma}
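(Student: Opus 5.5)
The argument is a direct containment check on the event \(\mathcal{E}\) in \eqref{eq:confidence_event}. Fix an outcome \(\omega\in\mathcal{E}\), a time step \(t\ge 1\), and an arbitrary grid point \(x\in\widehat{\mathcal{X}}^{s,*}_t\). It suffices to show \(f(x)\le\bar f\). Since \(x\in\mathcal{X}^*\) by construction, this places \(x\) in \(\mathcal{X}^{s,*}\).

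The key step uses the upper half of the two-sided confidence bound. On \(\mathcal{E}\), the inequality \(|f(x)-\mu_{t-1}(x)|\le\sqrt{\beta_t}\,\sigma_{t-1}(x)\) holds for this particular \(x\) and \(t\), because \(\mathcal{E}\) quantifies over all grid points and all time steps simultaneously. Dropping the absolute value gives
\begin{equation*}
f(x)\le \mu_{t-1}(x)+\sqrt{\beta_t}\,\sigma_{t-1}(x).
\end{equation*}
Membership of \(x\) in \(\widehat{\mathcal{X}}^{s,*}_t\), from \eqref{eq:disc_safe_hat_conv}, says the right-hand side is at most \(\bar f\). Chaining the two inequalities yields \(f(x)\le\bar f\). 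Hence \(x\in\mathcal{X}^{s,*}\).

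Because \(x\) and \(t\) were arbitrary, the inclusion holds for all \(t\ge1\) on \(\mathcal{E}\). By Lemma~\ref{lem_2}, \(\mathcal{E}\) has probability at least \(1-\delta\), provided \(\beta_t\) is chosen as in \eqref{eq:beta_function_MCB} with \(\sum_t\pi_t^{-1}=1\). This union-bound construction over both grid points and time steps is what makes the bound uniform in \(t\), so the inclusion holds simultaneously for all \(t\) on a single event rather than separately for each \(t\).

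There is essentially no hard step here. The one point needing care is this uniformity in \(t\): Lemma~\ref{lem_2} is stated for \(t\le T\), while \(\mathcal{E}\) is defined for all \(t\ge1\). This is handled by letting \(T\) be arbitrary, since \(\beta_t\) does not depend on \(T\) and the summability condition on \(\pi_t\) covers all \(t\).

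No Lipschitz term is needed, because the statement concerns grid points only. The margin \(L_f h_{\mathcal{X}^*,\mathcal{X}}\) would enter only when extending the conclusion to the continuous domain.
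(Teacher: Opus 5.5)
Your proof is correct and is the same argument the paper gives: take \(x\in\widehat{\mathcal{X}}^{s,*}_t\), use the upper half of the confidence bound on \(\mathcal{E}\) to get \(f(x)\le\mu_{t-1}(x)+\sqrt{\beta_t}\,\sigma_{t-1}(x)\le\bar f\), and conclude \(x\in\mathcal{X}^{s,*}\). Your extra remarks on why \(\mathcal{E}\) has probability at least \(1-\delta\) uniformly in \(t\) are not needed, since the lemma is stated conditionally on \(\mathcal{E}\).
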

\begin{proof}
For an arbitrary time index \( t \ge 1 \), let \( x \in \widehat{\mathcal{X}}^{s,*}_t \) be given. By the definition of the discrete GP-based safety condition,
\begin{equation}
\mu_{t-1}(x) + \sqrt{\beta_t}\,\sigma_{t-1}(x) \le \bar f.
\end{equation}
On $\mathcal{E}$, the GP confidence property gives
\begin{equation}
f(x) \le \mu_{t-1}(x) + \sqrt{\beta_t}\,\sigma_{t-1}(x).
\end{equation}
Combining the two inequalities yields $f(x) \le \bar f$, and therefore $x \in \mathcal{X}^{s,*}$.
\end{proof}
Lemma~\ref{lem:disc_soundness} establishes that the discrete GP-based safety condition never admits any grid point whose true function value 
violates the safety constraint. This guarantee, however, applies only to the finite set of test locations. To extend the safety guarantee to the continuous domain, the result is lifted using the assumed Lipschitz regularity of the unknown function $f$. To this end, the nearest-neighbor projection onto the test grid is defined as
\begin{equation}
\Pi_{\mathcal{X}^*}(x)\in
\arg\min_{x^*\in\mathcal{X}^*}\|x-x^*\|_2,
\end{equation}
which satisfies
\begin{equation}
\|x-\Pi_{\mathcal{X}^*}(x)\|_2
\le h_{\mathcal{X}^*,\mathcal{X}},
\qquad \forall x\in\mathcal{X}.
\end{equation}
This projection enables the construction of a continuous-domain safety condition by augmenting a Lipschitz margin that accounts for interpolation error between grid points.

Under Assumption~\ref{ass:Lip_bound} on the Lipschitz continuity of $f$, we define the continuous GP-based safe set as
\begin{equation}
\scalebox{0.95}{$\widehat{\mathcal{X}}^{s}_t
\coloneqq
\left\{
x\in\mathcal{X}
\ \middle|\
\mu_{t-1}(x^*)
+\sqrt{\beta_t}\,\sigma_{t-1}(x^*)
+L_f h_{\mathcal{X}^*,\mathcal{X}}
\le \bar f
\right\},$}
\label{eq:expanded_safe_via_proj_conv}
\end{equation}
where $x^* = \Pi_{\mathcal{X}^*}(x)$ denotes the nearest test-grid point to \(x\). The following result formalizes the safety condition, showing that the Lipschitz-based lifting preserves the safety guarantees established on the discrete test grid.
\begin{theorem}\label{thm:continuous_safety}
Let the Assumption~\ref{ass:Lip_bound} hold and that the GP confidence event \(\mathcal{E}\) defined in \eqref{eq:confidence_event} occur.
Then, for all \( t \ge 1 \), the GP-based safety condition defined in \eqref{eq:expanded_safe_via_proj_conv} satisfies
\begin{equation}
\widehat{\mathcal{X}}^{s}_t \subseteq \mathcal{X}^{s}.
\end{equation}
\end{theorem}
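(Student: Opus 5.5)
The argument is a direct chaining of three inequalities: the certification condition, the grid-level confidence bound on $\mathcal{E}$, and Lipschitz transfer from the grid point to $x$. Fix an arbitrary $t \ge 1$ and an arbitrary $x \in \widehat{\mathcal{X}}^{s}_t$. Let $x^* = \Pi_{\mathcal{X}^*}(x) \in \mathcal{X}^*$ be the nearest grid point used in \eqref{eq:expanded_safe_via_proj_conv}. By membership in $\widehat{\mathcal{X}}^{s}_t$,
\begin{equation*}
\mu_{t-1}(x^*) + \sqrt{\beta_t}\,\sigma_{t-1}(x^*) + L_f h_{\mathcal{X}^*,\mathcal{X}} \le \bar f .
\end{equation*}
The goal is to show $f(x) \le \bar f$, i.e.\ $x \in \mathcal{X}^s$.

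First, because $x^* \in \mathcal{X}^*$ and the event $\mathcal{E}$ in \eqref{eq:confidence_event} holds uniformly over the grid and over all $t \ge 1$, we have
\begin{equation*}
f(x^*) \le \mu_{t-1}(x^*) + \sqrt{\beta_t}\,\sigma_{t-1}(x^*).
\end{equation*}
This is the same step used in the proof of Lemma~\ref{lem:disc_soundness}. Substituting it into the certification inequality gives
\begin{equation*}
f(x^*) + L_f h_{\mathcal{X}^*,\mathcal{X}} \le \bar f .
\end{equation*}

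Second, Assumption~\ref{ass:Lip_bound} together with the projection property $\|x - x^*\|_2 \le h_{\mathcal{X}^*,\mathcal{X}}$ gives
\begin{equation*}
f(x) \le f(x^*) + L_f\|x - x^*\|_2 \le f(x^*) + L_f h_{\mathcal{X}^*,\mathcal{X}} .
\end{equation*}
Combining this with the previous display yields $f(x) \le \bar f$, so $x \in \mathcal{X}^s$ by \eqref{eq:safe_region}. Since $x$ and $t$ were arbitrary, $\widehat{\mathcal{X}}^{s}_t \subseteq \mathcal{X}^s$ for all $t \ge 1$.

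None of these steps is a real obstacle. The only point needing care is that the projection property requires the fill distance to be finite and attained as a bound for every $x$. This holds because $\mathcal{X}^*$ is a finite subset of $\mathcal{X}$, so the minimum defining $\Pi_{\mathcal{X}^*}(x)$ exists and is at most the supremum $h_{\mathcal{X}^*,\mathcal{X}}$. Ties in the $\arg\min$ are harmless, since the argument works for any selected nearest point. Also, the statement holds deterministically on $\mathcal{E}$; the probabilistic content, $\mathbb{P}(\mathcal{E}) \ge 1-\delta$, comes from Lemma~\ref{lem_2} and is not needed inside the argument.
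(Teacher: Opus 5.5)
Your proof is correct and follows the paper's argument step for step: you start from the certification inequality at $x^* = \Pi_{\mathcal{X}^*}(x)$, apply the upper confidence bound on $\mathcal{E}$ to get $f(x^*) + L_f h_{\mathcal{X}^*,\mathcal{X}} \le \bar f$, and finish with Lipschitz continuity and $\|x - x^*\|_2 \le h_{\mathcal{X}^*,\mathcal{X}}$. Your side remarks, that ties in the projection are harmless and that the argument is deterministic on $\mathcal{E}$, are accurate but add nothing the paper's proof needs.
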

\begin{proof}
Let $x\in\widehat{\mathcal{X}}^{s}_t$ and $x^*=\Pi_{\mathcal{X}^*}(x)$, the safety condition can be written as 
\begin{equation}
\mu_{t-1}(x^*)+\sqrt{\beta_t}\,\sigma_{t-1}(x^*)
+L_f h_{\mathcal{X}^*,\mathcal{X}}
\le \bar f.
\end{equation}
On the event $\mathcal{E}$, we have $f(x^*)\le \mu_{t-1}(x^*)+\sqrt{\beta_t}\,\sigma_{t-1}(x^*)$. Therefore, we have
\begin{equation}
f(x^*)+L_f h_{\mathcal{X}^*,\mathcal{X}}\le \bar f.
\end{equation}
By Lipschitz continuity of $f$,
\begin{equation}
f(x)\le f(x^*)+L_f\|x-x^*\|_2
\le \bar f,
\end{equation}
which implies $x\in\mathcal{X}^{s}$.
\end{proof}
Assuming the high-probability confidence event \(\mathcal{E}\) occur, every location admitted into the condition is truly safe, so no unsafe point is ever certified as safe. However, a location may be safe while still failing to satisfy \eqref{eq:disc_safe_hat_conv} if the GP upper confidence bound remains above the safety threshold. Safe points are likely to fail \eqref{eq:disc_safe_hat_conv} near the safety boundary, where $f(x) \approx \bar{f}$, and thus small posterior uncertainty can prevent safety certification.
To obtain a meaningful asymptotic certification result, the analysis therefore focuses on grid points that lie strictly inside the safe region with a nonzero margin separating them from the threshold. Accordingly, for any $\eta > 0$, the interior-safe grid is defined as
\begin{equation}
\mathcal{X}^{s,*,\eta}
\coloneqq
\left\{
x\in\mathcal{X}^*
\ \middle|\
f(x)\le \bar f - L_f h_{\mathcal{X}^*,\mathcal{X}} - \eta
\right\}.
\end{equation}The $\eta$ margin excludes points whose safety status may be affected by discretization or boundary effects. We now show that interior-safe grid points are eventually included in the discrete safety condition, provided that posterior uncertainty at those points contracts sufficiently.
\begin{theorem}\label{thm:interior_cert}
Let $\mathcal{X}^*\subset\mathcal{X}$, let $k$ be positive definite, and assume Gaussian noise with variance $\sigma_n^2>0$.
Let $\{\sigma_t^2(\cdot)\}_{t\ge 0}$ denote the GP posterior variance sequence induced by any sampling sequence
$\{x_t\}_{t\ge 1}\subseteq \mathcal{X}^*$. Then the following holds.
\begin{enumerate}[label=(\alph*)]
\item For every $z\in \mathcal{X}^*$, the sequence $\{\sigma_t^2(z)\}_{t\ge 0}$ is non-increasing and convergent.
\item If a point $x\in \mathcal{X}^*$ is sampled infinitely often, then
\begin{equation}
\sigma_t^2(x)\to 0
\qquad \text{as } t\to\infty.
\end{equation}
\item On the event $\mathcal{E}$, fix any margin $\eta>0$ and any
$x\in \mathcal{X}^{s,*,\eta}$.
If
\begin{equation}
\lim_{t\to\infty}\sqrt{\beta_t}\,\sigma_{t-1}(x) \rightarrow 0,
\end{equation}
then there exists a finite time $T_x<\infty$ such that
\begin{equation}\label{eq:safety_condition}
x\in \widehat{\mathcal{X}}^{s,*}_t,
\qquad \forall t\ge T_x.
\end{equation}
\end{enumerate}
\end{theorem}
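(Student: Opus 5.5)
The three parts can be handled separately, each resting on a standard GP fact.

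\textbf{Part (a).} I would use the rank-one (Schur complement) update of the posterior covariance. Adding the sample $x_{t+1}$ gives
\begin{equation*}
\sigma_{t+1}^2(z)=\sigma_t^2(z)-\frac{\sigma_t^2(z,x_{t+1})^2}{\sigma_t^2(x_{t+1})+\sigma_n^2},
\end{equation*}
where $\sigma_t^2(\cdot,\cdot)$ denotes the posterior covariance. This identity follows from block inversion of $K_{\mathrm{tt}}+\sigma_n^2\mathbf{I}_t$, or equivalently from conditioning jointly Gaussian variables. The subtracted term is nonnegative, so $\sigma_t^2(z)$ is non-increasing. It is also bounded below by $0$, because $\sigma_t^2(z)$ is a conditional variance; equivalently, positive definiteness of $k$ makes the joint covariance PSD. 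Monotone convergence then gives a limit. By Lemma~\ref{lem:invariance}, the order of acquisition is irrelevant, so processing samples one at a time is legitimate.

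\textbf{Part (b).} Suppose $x$ is sampled $m$ times among the first $t$ steps. The posterior given all data is at least as informative as the posterior given only those $m$ repeated noisy observations of $f(x)$. I would justify this by reusing the monotonicity of (a): starting from the prior, add the $m$ repeats first and then the remaining points. Lemma~\ref{lem:invariance} allows this reordering, and each additional sample can only decrease the variance. For $m$ i.i.d. Gaussian observations of the scalar $f(x)$ with prior variance $k(x,x)$, the posterior variance is the closed form
\begin{equation*}
\Bigl(\frac{1}{k(x,x)}+\frac{m}{\sigma_n^2}\Bigr)^{-1}\le \frac{\sigma_n^2}{m}.
\end{equation*}
Hence $\sigma_t^2(x)\le \sigma_n^2/m(t)$. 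Since $m(t)\to\infty$ when $x$ is sampled infinitely often, $\sigma_t^2(x)\to 0$.

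\textbf{Part (c).} This is a deterministic argument on $\mathcal{E}$. On $\mathcal{E}$, $\mu_{t-1}(x)\le f(x)+\sqrt{\beta_t}\sigma_{t-1}(x)$, so
\begin{equation*}
\mu_{t-1}(x)+\sqrt{\beta_t}\sigma_{t-1}(x)\le f(x)+2\sqrt{\beta_t}\sigma_{t-1}(x)\le \bar f-L_fh_{\mathcal{X}^*,\mathcal{X}}-\eta+2\sqrt{\beta_t}\sigma_{t-1}(x).
\end{equation*}
By hypothesis $\sqrt{\beta_t}\sigma_{t-1}(x)\to0$, so there is a finite $T_x$ with $2\sqrt{\beta_t}\sigma_{t-1}(x)\le\eta$ for all $t\ge T_x$. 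For those $t$ the right-hand side is at most $\bar f-L_fh_{\mathcal{X}^*,\mathcal{X}}\le\bar f$. This gives \eqref{eq:safety_condition}, and indeed the stronger Lipschitz-augmented condition of \eqref{eq:estimated_safe_region}.

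\textbf{Main obstacle.} The delicate step is (b): making the comparison with the repeated-samples-only posterior rigorous. I would frame it cleanly as "conditioning on more data never increases the Gaussian conditional variance." This holds for jointly Gaussian variables via the Schur complement, which is exactly (a) applied to a reordered sequence. A secondary subtlety is that $\beta_t$ grows logarithmically, which is why (c) assumes the product $\sqrt{\beta_t}\sigma_{t-1}(x)$ vanishes rather than deriving it from (b). I would remark on this, noting that (b) alone yields $\sigma_t=O(m^{-1/2})$, which suffices when $m(t)$ grows faster than $\ln\pi_t$.
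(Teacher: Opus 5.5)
Your proof is correct. Parts (a) and (c) match the paper's argument: (a) uses the rank-one variance update plus monotone convergence, and (c) uses the deterministic chain $\mu_{t-1}(x)+\sqrt{\beta_t}\sigma_{t-1}(x)\le f(x)+2\sqrt{\beta_t}\sigma_{t-1}(x)\le \bar f-L_fh_{\mathcal{X}^*,\mathcal{X}}-\eta+2\sqrt{\beta_t}\sigma_{t-1}(x)$. The appeal to Lemma~\ref{lem:invariance} in (a) is unnecessary, since samples already arrive one at a time.

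Part (b) is where you take a different route.

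\emph{Paper's route.} The paper argues by contradiction along the given sampling sequence. If $\sigma_t^2(x)\downarrow v>0$, then each visit to $x$ subtracts $\sigma_{t_j-1}^4(x)/(\sigma_n^2+\sigma_{t_j-1}^2(x))$, which is bounded below by a positive constant. Infinitely many such decrements are incompatible with the convergence established in (a).

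\emph{Your route.} You reorder the data so that the $m(t)$ repeated observations of $x$ come first. You then compute the resulting scalar conjugate posterior $(1/k(x,x)+m/\sigma_n^2)^{-1}$ exactly, and use monotonicity for the remaining points. This needs Lemma~\ref{lem:invariance} applied to a sequence with repeated locations. The permutation argument covers that case verbatim, because $\sigma_n^2>0$ keeps $K_{\mathrm{tt}}+\sigma_n^2\mathbf{I}_t$ invertible even though $K_{\mathrm{tt}}$ is then singular.

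\emph{Comparison.} The paper's route is more self-contained: it needs only the update identity from (a), with no reordering and no closed-form calculation, but it gives only qualitative convergence. Your route yields the quantitative bound $\sigma_t^2(x)\le\sigma_n^2/m(t)$. With $\beta_t=2\ln(M\pi_t/\delta)$, that bound gives a checkable sufficient condition, $\ln\pi_t/m(t-1)\to 0$, for the hypothesis $\sqrt{\beta_t}\,\sigma_{t-1}(x)\to 0$ that part (c) simply assumes. The paper's argument cannot supply such a condition.
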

\begin{proof}
\emph{(a).} The GP posterior variance update under noisy observations is given by
\begin{equation}
\sigma_t^2(z)
=
\sigma_{t-1}^2(z)
-
\frac{\Sigma_{t-1}(z,x_t)^2}{\sigma_n^2+\sigma_{t-1}^2(x_t)},
\qquad \forall z\in\mathcal{X}^*.
\end{equation} The subtracted term is nonnegative for all $z$, so $\sigma_t^2(z)$ is nonincreasing.
Since $\sigma_t^2(z)\ge 0$, the sequence converges.
\newline
\emph{(b).}
Let $\{t_j\}$ denote the subsequence of times at which $x_{t_j}=x$.
Suppose $\lim_{t\to\infty}\sigma_t^2(x)=v>0$.
Then for sufficiently large $j$,
\begin{equation}
\frac{\Sigma_{t_j-1}(x,x)^2}{\sigma_n^2+\sigma_{t_j-1}^2(x)}
=
\frac{\sigma_{t_j-1}^4(x)}{\sigma_n^2+\sigma_{t_j-1}^2(x)}
\ge
\frac{(v/2)^2}{\sigma_n^2+v},
\end{equation}
which is a strictly positive constant.
This contradicts the convergence of $\sigma_t^2(x)$.
Hence $\sigma_t^2(x)\to 0$.
\newline
\emph{(c).}
For $\eta>0$ and $x\in\mathcal{X}^{s,*,\eta}$, we have
\begin{equation}
f(x)\le \bar f - L_f h_{\mathcal{X}^*,\mathcal{X}} - \eta.
\end{equation}
On the event $\mathcal{E}$, the GP confidence bound yields
\begin{equation}
    f(x)
    \;\ge\;
    \mu_{t-1}(x) - \sqrt{\beta_t}\,\sigma_{t-1}(x),
\end{equation}
Adding $\sqrt{\beta_t}\,\sigma_{t-1}(x)$ to both sides gives
\begin{equation}
    \mu_{t-1}(x) + \sqrt{\beta_t}\,\sigma_{t-1}(x)
    \;\le\;
    f(x) + 2\sqrt{\beta_t}\,\sigma_{t-1}(x).
\end{equation}
Substituting the interior-safe inequality for $f(x)$ yields
\begin{equation}
    \mu_{t-1}(x) + \sqrt{\beta_t}\,\sigma_{t-1}(x)
    \;\le\;
    \bar f - L_f\, h_{\mathcal{X}^\ast,\mathcal{X}} - \eta
    + 2\sqrt{\beta_t}\,\sigma_{t-1}(x).\label{eq:neta_safety_condition1}
\end{equation}
Assuming $\lim_{t\to\infty} \sqrt{\beta_t}\,\sigma_{t-1}(x) = 0$ and by the definition of the limit, there exists $T_x < \infty$ such that for all $t \ge T_x$,
\begin{equation}
    2\sqrt{\beta_t}\,\sigma_{t-1}(x) \;\le\; \eta.\label{eq:neta_safety_condition}
\end{equation}
This condition requires that the posterior uncertainty at $x$ contracts sufficiently fast relative to the growth of $\beta_t$, which is determined by the chosen GP confidence parameter. Substituting \eqref{eq:neta_safety_condition} into \eqref{eq:neta_safety_condition1} implies that for all $t \ge T_x$,
\begin{equation}
    \mu_{t-1}(x) + \sqrt{\beta_t}\,\sigma_{t-1}(x)
    \;\le\;
    \bar f - L_f\, h_{\mathcal{X}^\ast,\mathcal{X}}.
\end{equation}
In particular, we have
\begin{equation}
    \mu_{t-1}(x) + \sqrt{\beta_t}\,\sigma_{t-1}(x)
    \;\le\;
    \bar f,
\end{equation}
and therefore, \eqref{eq:disc_safe_hat_conv} implies \eqref{eq:safety_condition}.
\end{proof}

\begin{corollary}\label{cor:interior_complete}
On the event \(\mathcal{E}\), fix any margin \(\eta>0\). If
\begin{equation}
\sqrt{\beta_t}\,\sigma_{t-1}(x)\to 0,
\qquad \forall x\in\mathcal{X}^{s,*,\eta},
\end{equation}
then there exists a finite time \(T^\ast<\infty\) such that
\begin{equation}\label{eq:subset}
\mathcal{X}^{s,*,\eta}\subseteq \widehat{\mathcal{X}}^{s,*}_t,
\qquad \forall t\ge T^\ast.
\end{equation}
\end{corollary}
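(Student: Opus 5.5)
The plan is to apply Theorem~\ref{thm:interior_cert}(c) pointwise and then use finiteness of the test grid to get a uniform time. Work throughout on the event \(\mathcal{E}\) and fix \(\eta>0\). By construction \(\mathcal{X}^{s,*,\eta}\subseteq\mathcal{X}^*\), and \(|\mathcal{X}^*|=M<\infty\), so \(\mathcal{X}^{s,*,\eta}\) is a finite set. If it is empty, the inclusion \eqref{eq:subset} holds trivially with \(T^\ast=1\). We may therefore assume it is nonempty.

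For each \(x\in\mathcal{X}^{s,*,\eta}\), the hypothesis of the corollary gives \(\sqrt{\beta_t}\,\sigma_{t-1}(x)\to 0\). Together with \(\mathcal{E}\), this is exactly the hypothesis of Theorem~\ref{thm:interior_cert}(c). That theorem supplies a finite \(T_x\) such that \(x\in\widehat{\mathcal{X}}^{s,*}_t\) for all \(t\ge T_x\). Concretely, \(T_x\) is any time after which \(2\sqrt{\beta_t}\,\sigma_{t-1}(x)\le\eta\), as in \eqref{eq:neta_safety_condition}.

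Next, set
\begin{equation*}
T^\ast \coloneqq \max_{x\in\mathcal{X}^{s,*,\eta}} T_x .
\end{equation*}
This is a maximum over finitely many finite integers, so \(T^\ast<\infty\). For any \(t\ge T^\ast\) and any \(x\in\mathcal{X}^{s,*,\eta}\), we have \(t\ge T_x\), hence \(x\in\widehat{\mathcal{X}}^{s,*}_t\). This yields \eqref{eq:subset}.

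The only step needing care is the passage from pointwise to uniform times. It relies entirely on the finiteness of \(\mathcal{X}^*\). On a continuum one would instead need uniform convergence of \(\sqrt{\beta_t}\,\sigma_{t-1}\), which the hypothesis does not provide. I would therefore state explicitly that \(M<\infty\) is the key ingredient. I would also note that the conclusion holds on \(\mathcal{E}\), and hence with probability at least \(1-\delta\) by Lemma~\ref{lem_2}.
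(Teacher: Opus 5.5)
Your proof is correct and follows essentially the same route as the paper: you apply Theorem~\ref{thm:interior_cert}(c) pointwise to obtain a finite $T_x$ for each $x\in\mathcal{X}^{s,*,\eta}$, then set $T^\ast=\max_x T_x$, which is finite because $\mathcal{X}^{s,*,\eta}\subseteq\mathcal{X}^*$ is a finite set. Your separate handling of the case $\mathcal{X}^{s,*,\eta}=\emptyset$, where the maximum would otherwise be undefined, is a small piece of care that the paper's proof omits.
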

\begin{proof}
By Theorem~\ref{thm:interior_cert}(c), for each \(x\in\mathcal{X}^{s,*,\eta}\), there exists a finite time \(T_x<\infty\) such that $x\in \widehat{\mathcal{X}}^{s,*}_t$ for all $t\ge T_x$.

Since \(\mathcal{X}^{s,*,\eta}\subseteq\mathcal{X}^*\) and \(\mathcal{X}^*\) is finite, the set \(\mathcal{X}^{s,*,\eta}\) is finite. Hence, $ T^\ast$ is defined as $T^\ast \coloneqq \max_{x\in\mathcal{X}^{s,*,\eta}} T_x < \infty$
Thus, \eqref{eq:subset} holds.
\end{proof}

In GP-HT, sampling is restricted to the currently certified-safe region and is guided by the MVS rule, which repeatedly queries locations of highest posterior variance within the admissible set. This safe-restricted variance-seeking rule is reminiscent of space-filling and greedy variance-reduction strategies studied in kernel interpolation, such as \(P\)-greedy sampling \cite[Thm.~4.1]{AlbrechtIske2025}. Theorem~\ref{thm:interior_cert} shows that any interior-safe grid point is eventually certified once its confidence width becomes sufficiently small, and Corollary~\ref{cor:interior_complete} then shows that there exists a finite time after which all points in \(\mathcal{X}^{s,*,\eta}\) belong to the certified-safe set \(\widehat{\mathcal{X}}^{s,*}_t\). In addition, Theorem~\ref{thm:continuous_safety} guarantees that the Lipschitz-lifted set \(\widehat{\mathcal{X}}_t^s\) remains a sound inner approximation of the true continuous safe set \(\mathcal{X}^s\) for all \(t\). Consequently, the proposed method asymptotically certifies a corresponding interior subset of the continuous safe region through the Lipschitz lifting, up to the discretization margin \(L_f h_{\mathcal{X}^*,\mathcal{X}}\).

\section{Simulations and Results}\label{sec:simulations}
This section presents two simulation studies and a hardware experiment to evaluate the performance of the developed safety-aware scalar field mapping framework using the GP-HT algorithm (Algorithm~\ref {alg:gp_ht}). Simulation 1 (section~\ref{subsec:simulation1}) considers a 2D scalar field with multiple high-intensity regions distributed across the domain. Simulation 2 (section~\ref{subsec:simulation2}) extends the framework to a 3D setting, where high-intensity regions are modeled as spherical regions. Finally, a hardware experiment demonstrates the effectiveness of the GP-HT algorithm using a wheeled robot equipped with an onboard scalar field intensity sensor (section~\ref{subsec:Experimental}).

\subsection{Two-Dimensional Simulation}\label{subsec:simulation1}
In this simulation, the scalar field used is generated using 
\begin{align}
f(x, y) = &\; \exp\left( -\frac{(x - x_1)^2 + (y - y_1)^2}{0.08} \right) \notag \\
         & + \exp\left( -\frac{(x - x_2)^2 + (y - y_2)^2}{0.08} \right),
\label{eq:true_field}
\end{align}
where the coordinates of the two scalar field sources are selected as $ (x_1, y_1) = (0.25,0.75)$ and $ (x_2, y_2) = (0.75,0.25) $. The domain of the function is restricted to $ x, y \in [0, 1]$. The function output ranges in $f \in (0,1]$, and the threshold $\bar{f}=0.7$ is selected to define high-intensity regions. The exponential term in \eqref{eq:true_field} represents a localized intensity peak centered at its respective source location, with the parameter 0.08 controlling the spatial spread, as shown in Figure \ref{fig:SurfacePlot}. Initially, 100 measurement locations are planned using the information maximization in \eqref{eq:var_criterion} as shown in Figure \ref{fig:temporal_GP-HT-2D}.

The GP model is initialized with a zero-mean prior,  \( \alpha =1  \text{ and } l = 0.15 \) are selected as hyperparameters for the kernel. The GP posterior is evaluated over a fixed grid of 10,000 uniformly distributed testing locations. High-intensity regions are estimated at each step using the \texttt{imfindcircles} HT routine of MATLAB on the binary high-intensity map generated from the GP posterior. A radius range of $[0.05, 0.15]$ is selected to capture the high-intensity regions, with a sensitivity parameter of 1 to ensure robust detection. A temporal evaluation of the detected high-intensity regions is illustrated in Figure~\ref{fig:temporal_GP-HT-2D}. Figures~\ref{fig:temporal_GP-HT-2D} also compare the initially planned and the final executed measurement locations.

\begin{algorithm}[ht]
\caption{GP-HT: Safe field mapping using Hough transform-based high-intensity detection}
\label{alg:gp_ht}
\begin{algorithmic}[1]
\Require GP hyperparameters \( (\alpha, l, \sigma_n) \), safety threshold \( \bar{f} \), initial location of the robot \( x_1 \), testing locations \( \mathcal{X}^* \)
\Ensure Final dataset $\mathcal{D}_T$, posterior GP $\mu_T$ and $\sigma_N$, executed measurement sequence $\mathbf{X}_T^T$, trajectory $\Phi$
\vspace{2mm}
\State \textbf{Offline:}
\State Given intial location $x_1$ and $\mathcal{X}^*$ compute kernel matrices using \eqref{eq:Ktt}
\State Select \( T \) locations via  MVS samping over \( \mathcal{X}^* \) using \eqref{eq:var_criterion}  and compute $\mathcal{X}^0_T$
\State Reorder \( \mathcal{X}^0_T \) via nearest-neighbor strategy to minimize travel distance and compute  $\mathbf{X}^0_T$
\State \textbf{Initialize:} Set executed sequence $\mathbf{X}_T^T \gets \emptyset$, dataset $\mathcal{D}_0 \gets \emptyset$, and global trajectory $\Phi \gets \emptyset$, prior mean $\mathbf{\mu}_0 \gets \mathbf{0}_{M \times 1}$ and covariance $\mathbf{\Sigma}_0 \gets \mathbf{K}_{\mathcal{X}^*, \mathcal{X}^*}$

\State \textbf{Online:}
\For{$t = 0, \dots, T$}
    \State Observe $y_t = f(x_t) + \epsilon$ and update history: $\mathcal{D}_t \gets \mathcal{D}_{t-1} \cup \{(x_t, y_t)\}$ and $\mathbf{X}_T^T \gets \mathbf{X}_T^T \cup \{x_t\}$
    \State Compute $\mathbf{\mu}_t \in \mathbb{R}^M, \mathbf{\Sigma}_t \in \mathbb{R}^{M \times M}$ using Eq.~\eqref{eq:mean_eq} and \eqref{eq:var_eq}
    \State Generate binary map \( \mathcal{G}_t \) using \( \mathbf{\mu}_t, \mathbf{\Sigma}_t, \beta_t, \bar{f} \) (Eq.~\ref{eq:binary_map})
    \State Apply HT on \( \mathcal{G}_t \) to extract primitives $\mathcal{C}_t$ (Eq.~\ref{eq:hough_output_2D}) using functions \texttt{imfindcircles} or \texttt{regionprops3} to 
     \State Construct unsafe region approximation $\bar{\mathcal{X}}_t^u$ (Eq.~\ref{eq:overapprox_unsafe}) and safe region $\bar{\mathcal{X}}_t^s$ (Eq.~\eqref{eq:hough_output_2D})
    \State Compute $\bar{\mathcal{X}}_t^u$ (Eq.~\eqref{eq:overapprox_safe})
    \For{$k = t+1, \dots, T$} \Comment{Check future planned points}
        \If{$x_k \in \bar{\mathcal{X}}^u_{t}$}
            \State Relocate $x_k \gets \arg \min_{x \in \bar{\mathcal{X}}^s_t} \| x - x_k \|_2$
        \EndIf
    \EndFor
    \State Check blocked candidates from $\mathbf{X}_T^0$ reinstate if now in $\bar{\mathcal{X}}^s_t$
    \If{$t < T$}
        \State Compute local path $\phi_t$ from $x_t$ to $x_{t+1}$ using RRT* with $\bar{\mathcal{X}}^u_{t}$ as obstacles
        \State Execute $\phi_t$ to move robot to $x_{t+1}$
        \State Append local path to global trajectory: $\Phi \gets \Phi \cup \phi_t$
    \EndIf
\EndFor
\State \Return $\mathcal{D}_T$, $\mathbf{\mu}_T$, $\mathbf{\Sigma}_T$, $\mathbf{X}_T^T$, $\Phi$
\end{algorithmic}
\end{algorithm}

\subsection{Three-Dimensional Simulation}\label{subsec:simulation2}
This simulation extends the developed framework to a 3D scalar field defined over the cubic domain $\mathcal{X} = [0, 10]^3$. The field is generated using four field sources, each contributing an exponentially decaying intensity centered at fixed ground-plane locations $(x_i, y_i, z_i) \in \{(2,2,0),\; (2,8,0),\; (8,2,0),\; (8,8,0)\}$. The scalar field function is given by
\begingroup\medmuskip=0mu\thickmuskip=0mu\thinmuskip=0mu
\begin{align}
f(x, y, z) = &\scalebox{0.9}{$ A_1 \exp\left( -\lambda \sqrt{(x - x_1)^2 + (y - y_1)^2 + (z - z_1)^2} \right)$} \notag \\
            & \scalebox{0.9}{$+ A_2 \exp\left( -\lambda \sqrt{(x - x_2)^2 + (y - y_2)^2 + (z - z_2)^2} \right)$} \notag \\
            & \scalebox{0.9}{$+ A_3 \exp\left( -\lambda \sqrt{(x - x_3)^2 + (y - y_3)^2 + (z - z_3)^2} \right)$} \notag \\
            & \scalebox{0.9}{$+ A_4 \exp\left( -\lambda \sqrt{(x - x_4)^2 + (y - y_4)^2 + (z - z_4)^2} \right)$},
\label{eq:true_field_3D}
\end{align}
\endgroup
where \( A_i \) is the amplitude of the \( i \)-th source and selected as $A_1,A_4 =40$ and $A_2,A_3 =20$, and \( \lambda = 1.7 \) controls the spatial decay rate of the field.

A binary point cloud constructed using the threshold $\bar{f}= 2$. The high-intensity regions are detected using MATLAB's \texttt{bwlabeln} and \texttt{regionprops3} functions. The resulting high-intensity regions are extracted using the centroids and radii of the detected spheres.

The GP model is initialized with a zero-mean prior, and  \( \alpha =1  \text{ and } l = 2.5 \) are selected as hyperparameters for the kernel. The predicted high-intensity regions are visualized using isosurfaces, and representative 2D slices at \( x = 2 \) and \( x = 8 \) are shown for qualitative comparison with the true field as shown in Figures~\ref{fig:scalar_field_cutouts}-\ref{fig:gp_cutouts}, respectively. The actual and detected high-intensity regions are shown in Figures~\ref{fig:scalar_field_sources1}-\ref{fig:scalar_field_sources2}.

\begin{figure}[htbp]
    \centering
    \includegraphics[width=0.3\textwidth]{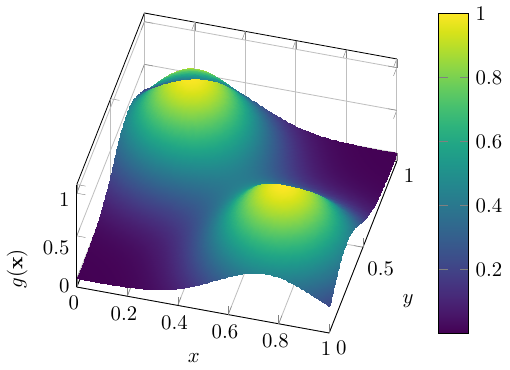}
    \caption{3D Surface plot of the true scalar field.}
    \label{fig:SurfacePlot}
\end{figure}

\begin{figure}[htbp]
    \centering
    \includegraphics[width=0.3\textwidth]{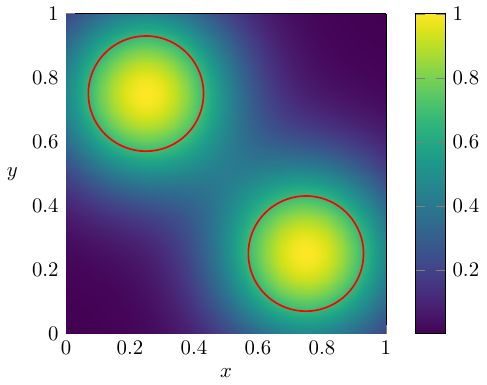}
    \caption{The top view of the scalar field is shown with the high-intensity regions shown as interior regions of circles in red color.}
    \label{fig:SurfacePlot_top_view}
\end{figure}



\begin{figure*}[t]
    \centering
\includegraphics[width=\textwidth]{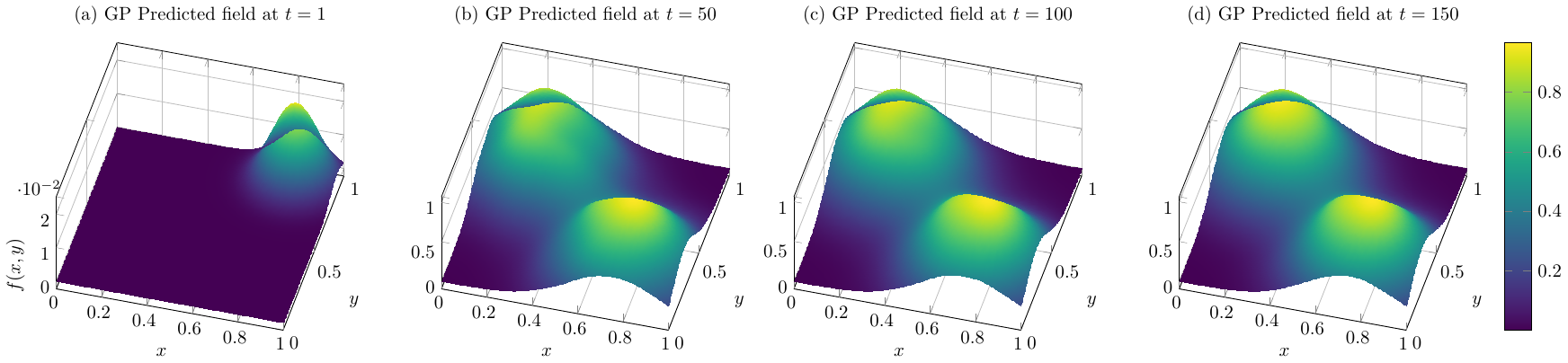}
    \centering
\includegraphics[width=\textwidth]{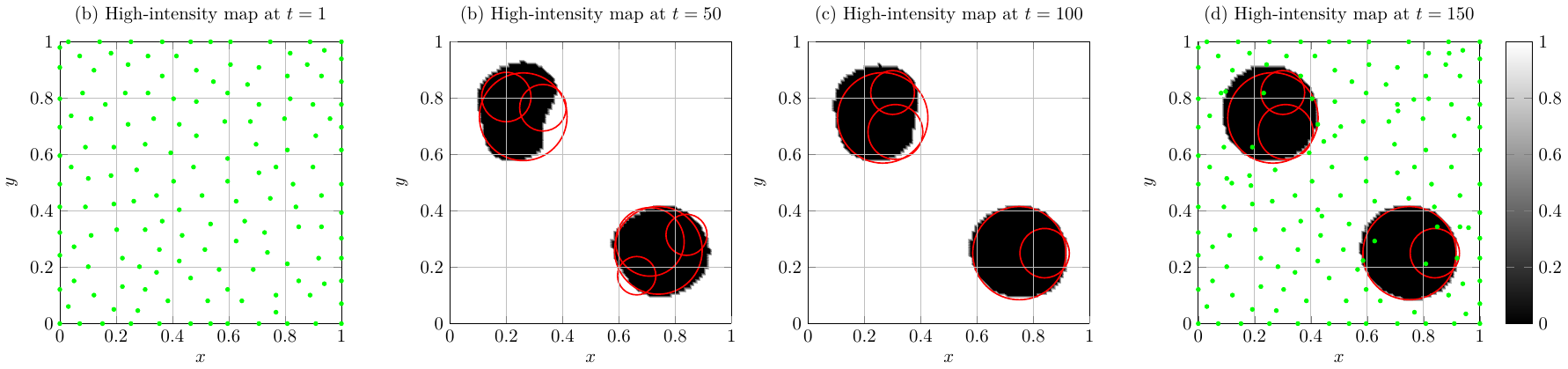}
\caption{Temporal evolution of the GP posterior mean function and estimates of the high-intensity regions. The top row shows the progression of the mean function as the number of measurements increases in plots left to the right at $t=1,50,100$, and $150$, respectively. The bottom row displays the binary image of the high-intensity regions, with the interior of red circles indicating estimated high-intensity regions using the HT. The planned and executed measurement locations are also shown as green dots.}
    \label{fig:temporal_GP-HT-2D}
\end{figure*}

\begin{figure}[htbp]
    \centering
    \includegraphics[width=0.3\textwidth]{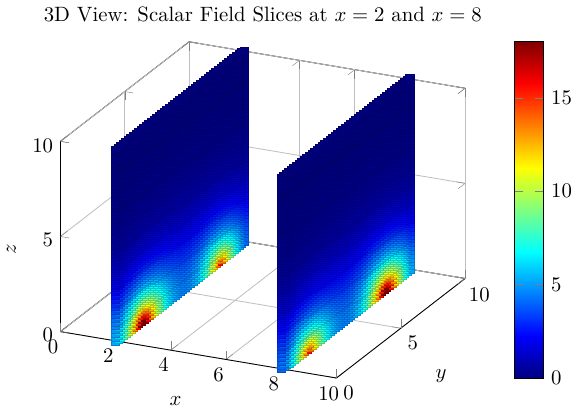}
    \caption{The cutouts of the actual scalar field at $x=2$ and $x=8$.}
    \label{fig:scalar_field_cutouts}
\end{figure}

\begin{figure}[htbp]
    \centering
    \includegraphics[width=0.3\textwidth]{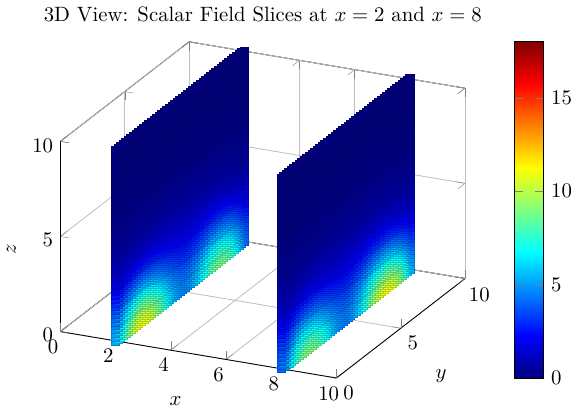}
    \caption{The cutouts of the GP predicted scalar field at $x=2$ and $x=8$.}
    \label{fig:gp_cutouts}
\end{figure}

\begin{figure}[htbp]
    \centering\includegraphics[width=0.3\textwidth]{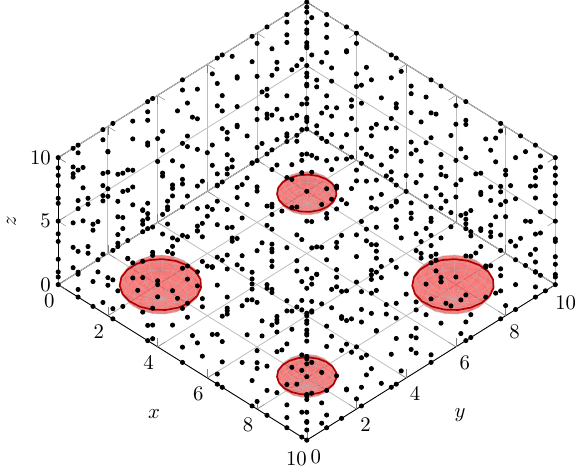}
    \caption{The regions inside the red ellipses denote the regions where the true} field intensity exceeds the given threshold. The black dots show the set of measurement locations assigned to the robot at the commencement of the episode.
    \label{fig:scalar_field_sources1}
\end{figure}
\begin{figure}[htbp]
    \centering
    \includegraphics[width=0.3\textwidth]{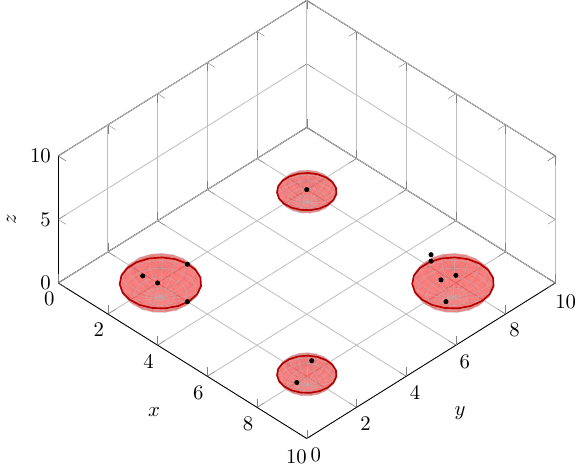}
    \caption{The estimated high-intensity regions in the domain. The planned measurement locations detected in estimated high-intensity regions are also shown as black dots.}
    \label{fig:scalar_field_sources2}
\end{figure}

\subsection{Experimental Results}
\label{subsec:Experimental}
The developed GP-HT algorithm is experimentally validated in an indoor experiment at the University of Florida, where the scalar field is generated using the existing ambient lights in the lab, as shown in Figure \ref{fig:lab_light}. A Husarion 2R-Pro ground robot equipped with an Adafruit TSL2591 lux sensor was used for data collection as shown in Figure \ref{fig:robot_sensor}. The resulting light intensity field is shown in Figure~\ref{fig:ground_lab_light}. A total of 35 measurement locations are planned using the MVS criteria in \eqref{eq:var_criterion}. 
\begin{figure}[htbp]
\centering
\includegraphics[width=0.45\textwidth]{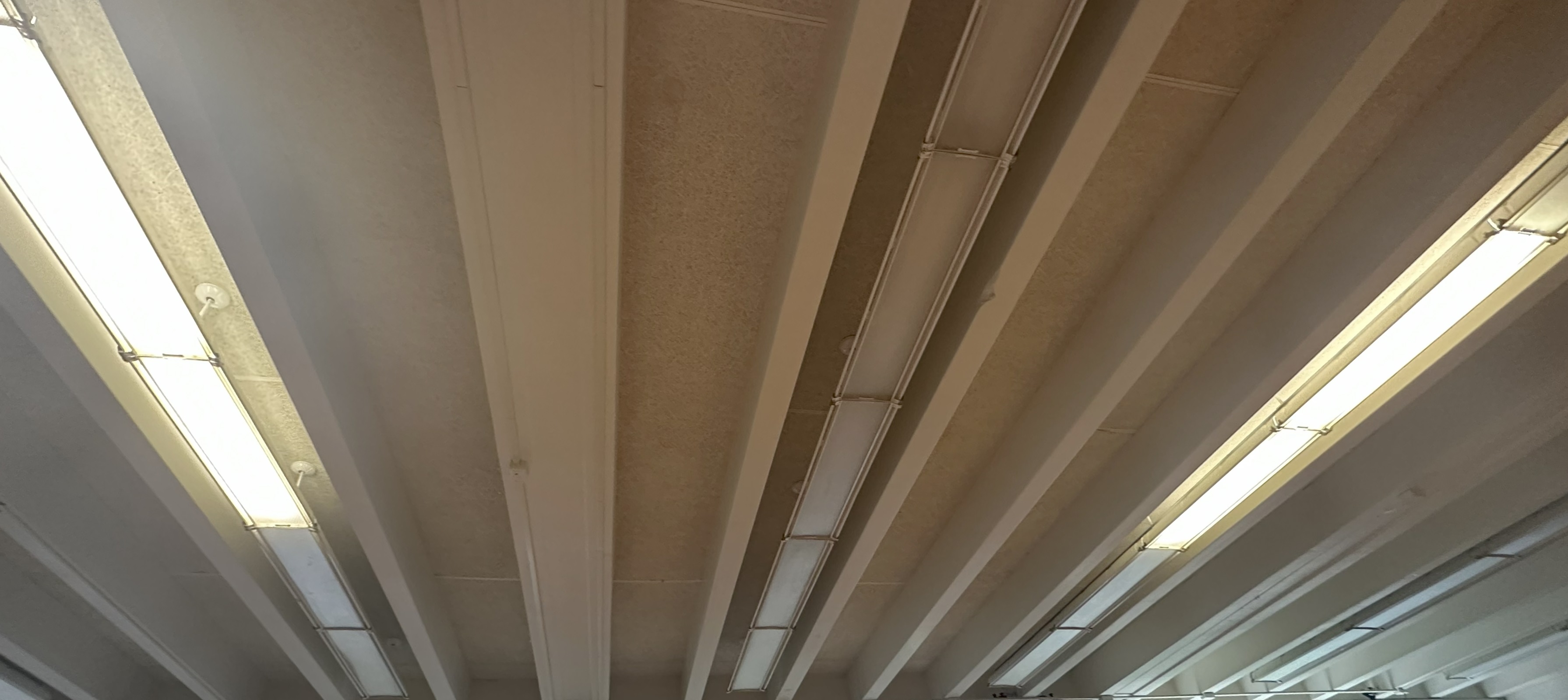}
\caption{Ceiling view of the Lab at the University of Florida, showing the arrangement of overhead lights. The corresponding ground-level light intensity field is shown in Fig.~\ref{fig:ground_lab_light}.}
\label{fig:lab_light}
\end{figure}

\begin{figure}[htbp]
\centering
\includegraphics[width=0.4\textwidth]{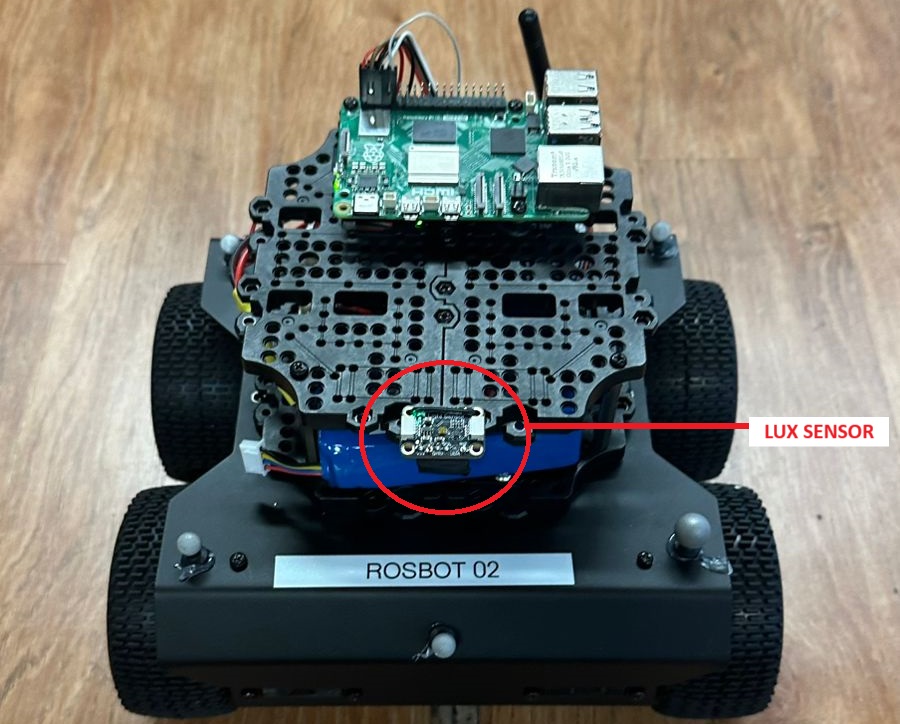}
\caption{Mobile robot used for light intensity measurements in the NCR Lab. The platform is equipped with an Adafruit TSL2591 sensor to record lux readings.}
\label{fig:robot_sensor}
\end{figure}

\begin{figure}[htbp]
\centering
\includegraphics[width=0.5\textwidth]{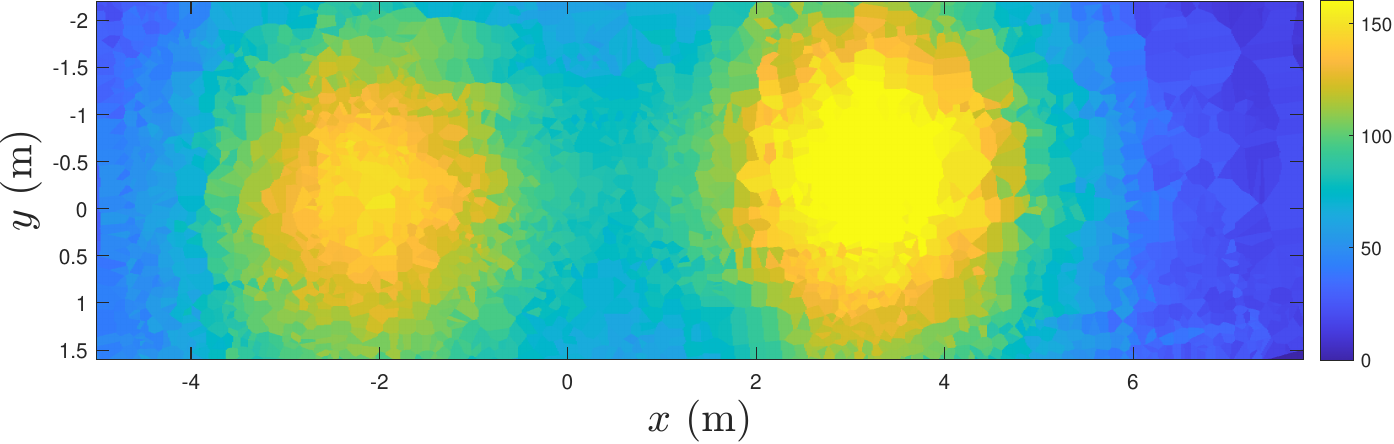}
\caption{Measured ground-level light intensity field produced by the lighting layout in Fig.~\ref{fig:lab_light}. }
    \label{fig:ground_lab_light}
\end{figure}

\begin{figure*}[htbp]
    \centering
    \begin{subfigure}{0.45\textwidth}
        \centering
        \includegraphics[width=\textwidth]{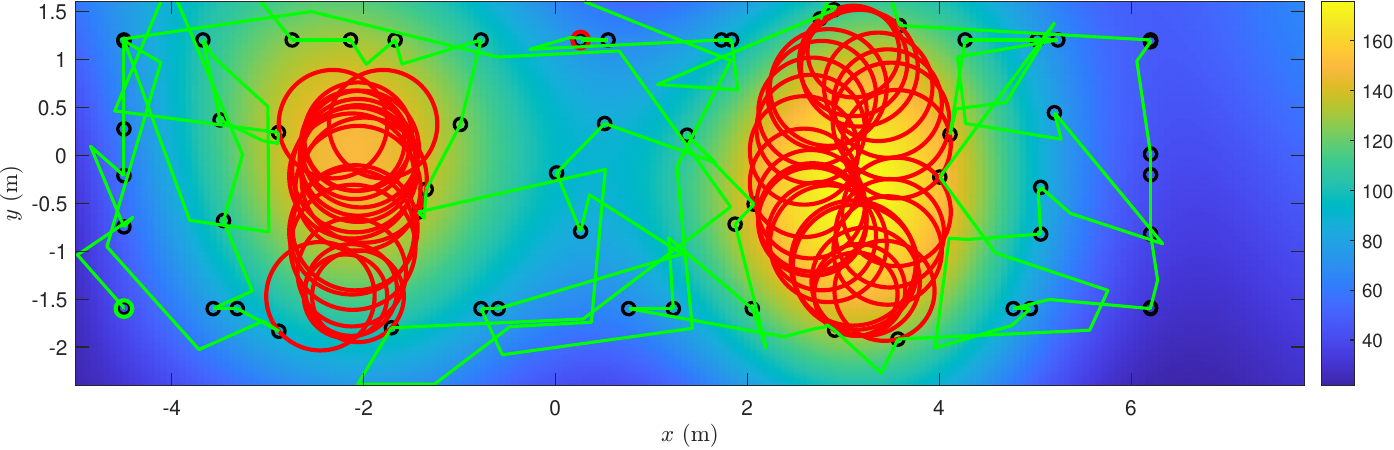}
        \caption{Initial position: bottom-left}
        \label{fig:robot_trajectory_BL}
    \end{subfigure}
    \hfill
    \begin{subfigure}{0.45\textwidth}
        \centering
        \includegraphics[width=\textwidth]{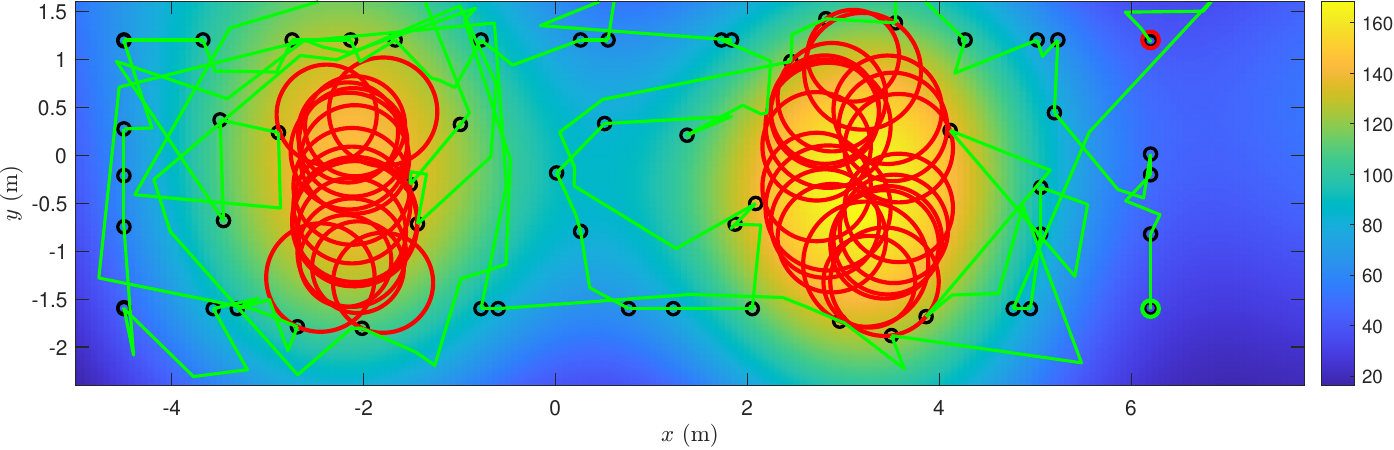}
        \caption{Initial position: bottom-right}
        \label{fig:robot_trajectory_BR}
    \end{subfigure}
    
    \vspace{0.5em}
    \begin{subfigure}{0.45\textwidth}
        \centering
        \includegraphics[width=\textwidth]{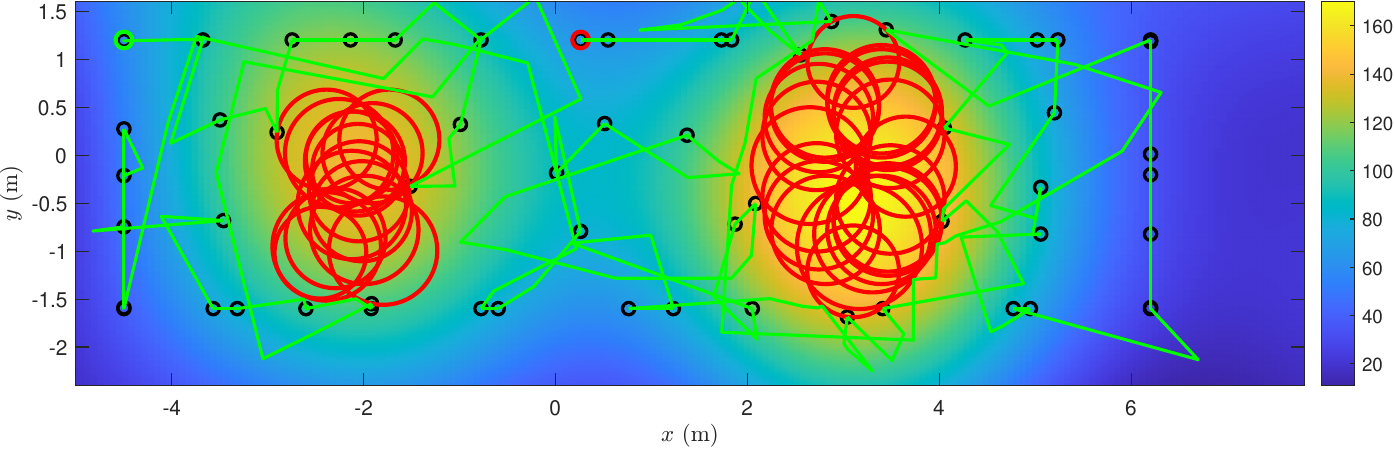}
        \caption{Initial position: top-left}
        \label{fig:robot_trajectory_TL}
    \end{subfigure}
    \hfill
    \begin{subfigure}{0.45\textwidth}
        \centering
        \includegraphics[width=\textwidth]{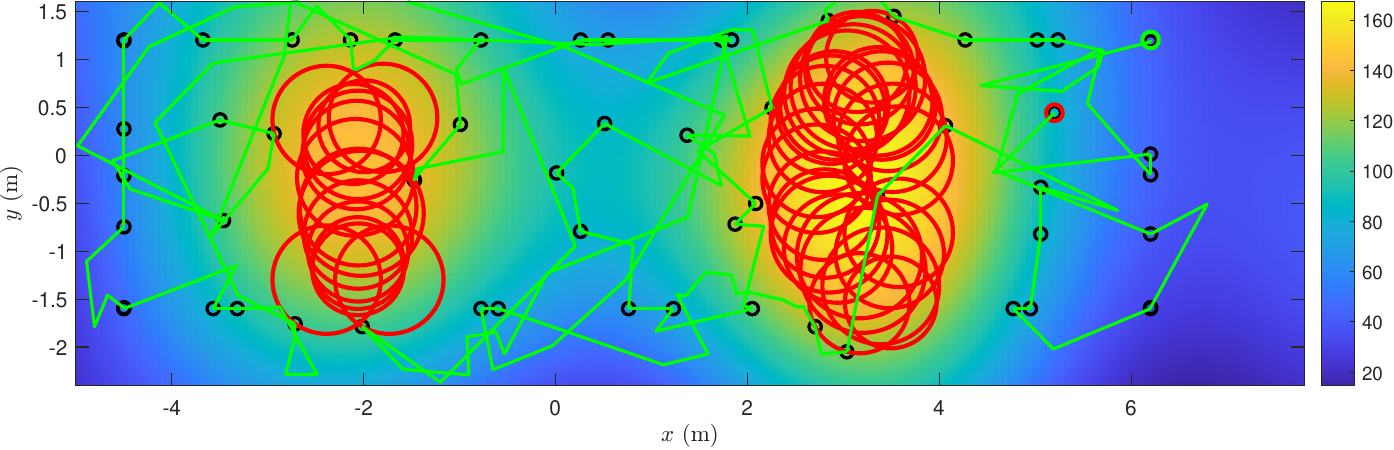}
        \caption{Initial position: top-right}
        \label{fig:robot_trajectory_TR}
    \end{subfigure}
    \caption{Final GP-predicted light intensity fields and corresponding high-intensity regions for four experiments with different initial robot positions. The robot trajectory is shown in green, with start and end positions marked by the green and red circles, respectively.}
    \label{fig:robot_trajectories_all}
\end{figure*}

At each location, light intensity readings are acquired and used to update the GP model. After each update, the HT is applied to the GP posterior to detect high-intensity regions. The RRT* algorithm is then invoked to compute a collision-free path from the current position to the next measurement location. The procedure is repeated when the robot reaches the next measurement location. Four different experiments are performed, starting from different starting positions of the robot as shown in Figures~\ref{fig:robot_trajectories_all}.

\section{Discussion and Conclusion}\label{sec:discussion}
This paper introduced the GP-HT algorithm, an adaptive sampling strategy for scalar field mapping that integrates safety constraints via high-intensity region avoidance.

The simulation and the experimental results validate the performance of the GP-HT algorithm and its ability to map scalar fields while dynamically relocating measurements to avoid high-intensity regions. The observation of executed measurement locations inside the high-intensity regions in Figure \ref{fig:temporal_GP-HT-2D} is attributed to measurements acquired during the early phases of the experiment, where the GP model had not yet acquired sufficient measurements for accurate detection of high-intensity regions. A primary challenge in safe scalar field mapping is the lack of existing techniques that formally integrate safety constraints into Gaussian Process-based methodologies. The GP-HT algorithm addresses this shortcoming by developing a novel strategy to demarcate high-intensity regions online using field measurements, allowing robots to maintain a safe distance without a priori knowledge of hazard locations. A key trade-off is between the need for accurate field estimation and the imperative of safe operation. Due to avoidance constraints, the prediction error converges to a bounded non-zero value, as shown in the convergence analysis.

The performance of the GP-HT algorithm hinges on the selection of the kernel hyperparameters, which are tuned in this work by hand. However, given sufficient data, the algorithm can be implemented using hyperparameter optimization techniques, reducing reliance on hand-tuned parameters. The use of HT to approximate hazardous regions as simple geometric shapes offers computational efficiency but may falter in complex environments with irregular hazard boundaries. Additionally, early-stage vulnerability is evident, as initial measurements may fall within hazardous zones before the GP model sufficiently learns the field structure.

Future work will address several key challenges: enhancing the flexibility of hazard region representations beyond simple geometric shapes. Additionally, extending the framework to dynamic scalar fields where the field source follows a motion plan remains an open challenge.

 \small
\bibliographystyle{IEEETrans.bst}
\bibliography{scc,sccmaster,muzaffar}

\end{document}